\newcolumntype{P}[1]{>{\centering\arraybackslash}p{#1}}
\newcolumntype{b}{>{\hsize=1.3\hsize}X}
\newcolumntype{s}{>{\hsize=.45\hsize}X}
\newcolumntype{m}{>{\hsize=.9\hsize}X}
\newtheorem{theorem}{Theorem}[section]
\newtheorem{lemma}[theorem]{Lemma}
\newtheorem{fact}[theorem]{Fact}
\newtheorem{definition}[theorem]{Definition}
\newcommand{\msr}{{\large$^\star$}}
\newcommand{\ms}{{\large$^\dagger$}}
\newcommand{\aws}{{\large$^\ddagger$}}
\newcommand{\note}[2]{
    \ifthenelse{\equal{\showComments}{yes}}{\textcolor{#1}{#2}}{}
}
\newcommand{\todo}[1]{{\color{red} \sf \bf [TODO: #1]}}
\newcommand{\showComments}{yes}
\newcommand{\maxfpgasize}{\ensuremath{M}}
\newcommand{\acc}{\ensuremath{\mathrm{acc}}}
\newcommand{\cpu}[1]{p_{#1}^{\mathrm{cpu}}}
\newcommand{\cpunoindex}{\mathrm{cpu}}
\newcommand{\fpga}[1]{p_{#1}^{\acc}}
\newcommand{\fpganoindex}{\acc}
\newcommand{\mem}[1]{m_{#1}}
\newcommand{\comm}[1]{c_{#1}}
\newcommand{\singlesubgraphcount}{q}
\newcommand{\subgraphcount}{kq}
\newcommand{\TotalLatency}{\mathrm{TotalLatency}}
\newcommand{\Latency}[1]{\mathrm{Latency}_{#1}}
\newcommand{\Incoming}[1]{\mathrm{CommIn}_{#1}}
\newcommand{\Outgoing}[1]{\mathrm{CommOut}_{#1}}
\newcommand{\Start}[1]{\mathrm{SubgraphStart}_{#1}}
\newcommand{\Finish}[1]{\mathrm{SubgraphFinish}_{#1}}
\newcommand{\MaxLoad}{\mathrm{MaxLoad}}
\newcommand{\Load}[1]{\mathrm{Load}_{#1}}
\newcommand{\numideals}{\mathcal{I}}
\newcommand{\ncmaxexpert}{$2\times$\xspace}
\newcommand{\ncavgexpert}{$1.46\times$\xspace}
\newcommand{\ncmaxlocal}{$2.08\times$\xspace}
\newcommand{\ncavglocal}{$1.29\times$\xspace}
\newcommand{\ncmaxpd}{$1.21\times$\xspace}
\newcommand{\ncavgpd}{$1.10\times$\xspace}
\newcommand{\ncmaxscotch}{$7.69\times$\xspace}
\newcommand{\ncavgscotch}{$1.50\times$\xspace}
\newcommand{\expertnc}{$68\%$\xspace}
\newcommand{\expertc}{$71\%$\xspace}
\title{Efficient Algorithms for Device Placement\\ of DNN Graph Operators}
\date{}
\author{
Jakub Tarnawski\msr, Amar Phanishayee\msr, Nikhil R. Devanur\aws\thanks{Work done while at Microsoft Research.},\\
Divya Mahajan\ms, Fanny Nina Paravecino\ms
\\
\\
\rm{\textit{\msr Microsoft Research \hspace{0.02in} \aws Amazon \ms Microsoft\hspace{0.02in}}}
}
\begin{document}

\maketitle

\begin{abstract}
Modern machine learning workloads use large models, with complex structures, that are very expensive to execute. The devices that execute complex models are becoming increasingly heterogeneous as we see a flourishing of domain-specific accelerators being offered as hardware accelerators in addition to CPUs. These trends necessitate distributing the workload across multiple devices. Recent work has shown that significant gains can be obtained with \emph{model parallelism}, i.e, partitioning a neural network's computational graph onto multiple devices. In particular, this form of parallelism assumes a \emph{pipeline} of devices, which is fed a stream of samples and yields high throughput for training and inference of DNNs. However, for such settings (large models and multiple heterogeneous devices), we require automated algorithms and toolchains that can partition the ML workload across devices.
In this paper, we identify and isolate the \emph{structured optimization problem} at the core of device placement of DNN operators, for both inference and training, especially in modern pipelined settings. We then provide algorithms that solve this problem to optimality. We demonstrate the applicability and efficiency of our approaches using several contemporary DNN computation graphs.
\end{abstract}
\section{Introduction}
\label{sec:intro}

Deep Neural Networks (DNNs) have been effective across a range of applications, including image classification~\cite{krizhevsky2012imagenet,simonyan2014very,he2016deep}, translation~\cite{wu2016google}, language modeling~\cite{merity2017regularizing}, and video captioning~\cite{venugopalan2015sequence}.
The proliferation of heterogeneous hardware accelerators~\cite{tpu, dnnweaver} coupled with the dramatic growth in the size and the structural complexity of DNNs has bolstered the importance of \textit{model parallelism}, where for both inference and training, the model is distributed across devices.

\textbf{DNN inference} in the ``single-stream'' setting~\cite{mlperf-inf}, where only one inference request is issued at a time, is \emph{latency-sensitive}.
To achieve low latency, model parallel executions split the model across many  accelerators~\cite{eyeriss-isscc-2016, bw-isca-2018, bw-micro-2018}.
Model-parallel inference is beneficial due to three primary reasons.
First, such splits are mandated by the memory-capacity (size) limitations of accelerators that cannot fit a single DNN model.
Current DNN models have billions of parameters and require multiple GBs of space to store the weights and intermediate activations.
Second, wide branching in recent DNN structures, as well as in the operator-granularity graphs for established DNNs, opens up the potential of executing data-independent sections of the computation in parallel to reduce latency.
Third, the model needs to be split across multiple types of devices when a subset of operators in the graph are better suited or only supported to execute on certain devices.

\textbf{DNN training}, on the other hand, is \emph{throughput-bound}, as is DNN inference for the ``offline'' setting where many inputs can be serviced together~\cite{mlperf-inf}.
Model parallelism has been proposed for training for the very same motivational reasons listed for inference above~\cite{krizhevsky2012imagenet, krizhevsky2014one}. 
Early influential systems such as DistBelief~\cite{dean2012distbelief} and Project Adam~\cite{chilimbi2014adam} split models to operate on commodity CPU clusters and out of CPU caches.
In such a setting, operators in a DNN model are partitioned across the available devices, with each device evaluating and performing updates only on a subset of the model’s parameters for all inputs.
While traditional model parallel training suffers from problems of low hardware utilization, as only a single accelerator is active at any given time, \textbf{pipelined model parallelism} overcomes this deficiency.
The amount of data communicated in pipelined training is the size of intermediate outputs (and corresponding gradients), which need to be sent across accelerators, and is much lower than the size of data communicated in data-parallel training.
In particular, for a range of existing models that fit on a single GPU, PipeDream~\cite{harlap2018pipedream, narayanan2018pipedream}
%(\cref{fig:train_pd})
uses pipelined model-parallelism to achieve much faster training time to advertised accuracy than data-parallelism.
Similarly, GPipe~\cite{huang2018gpipe, huang2019gpipe}
%(\cref{fig:train_gpipe})
uses pipelined model-parallel training for very large models whose total training memory footprint exceeds the memory capacity of a single accelerator.

Given the importance of model-parallel inference and training, in this paper we present efficient algorithms to answer the following general question: \textit{For a DNN model and a deployment scenario (a set of accelerators and their memory and interconnect constraints), how can we effectively partition the model to optimize the metric of interest, such as latency or throughput, relevant to the inference or training task at hand?}

We provide novel algorithmic approaches to tackle the problem of partitioning the model in both model-parallel
inference and training scenarios, optimizing for their corresponding metrics of interest:
\begin{itemize}%[noitemsep,topsep=0pt,leftmargin=*]
    \item Inference \--- (i) Model-Parallel Inference, optimized for ``single-stream'' latency (\cref{fig:inf_mp}), (ii) Pipelined Inference, optimized for ``offline'' throughput (\cref{fig:inf_pipe}).
    \item Training, optimized for throughput \--- (i) Model-Parallel Training (\cref{fig:train_mp}), (ii) Pipeline-Parallel Training with PipeDream and GPipe schedules (\cref{fig:training_schedule}).
\end{itemize}

In particular, for both non-pipelined and pipelined settings, we identify the combinatorial optimization problem at the core of the device placement question, whose solution will yield the \emph{optimal} partition.
We then show how to solve this problem to optimality via Integer Programming (IP) and Dynamic Programming (DP) based algorithms.
Our methods are general as they can be applied either to coarse-granularity layer graphs or to more complex fine-granularity operator graphs.
We support graph partitions where accelerators can hold a \emph{non-contiguous} fragment of the graph.
We evaluate our partitioning algorithms for different scenarios described above for a variety of modern DNN workloads (7 DNNs, 16 layer and operator graphs).
We find that the placements are efficient and result in non-trivial optimal splits; non-contiguous splits outperform all the techniques, with an improvement of up to \ncmaxexpert over expert (average \ncavgexpert), \ncmaxlocal over local search (average \ncavglocal)~\cite{localsearch}, \ncmaxpd over PipeDream (average \ncavgpd)~\cite{narayanan2018pipedream}, \ncmaxscotch over Scotch (average \ncavgscotch)~\cite{scotch}.
%
%The code and datasets used in the evaluation of this paper are included in the supplementary material.

\paragraph{Outline.}
This paper is organized as follows.
We discuss related work in \cref{sec:related_work}.
In \cref{sec:model} we define our model of DNN computation,
as well as the input/output specification for the problem of finding the best split.
Next, in \cref{sec:latency,sec:throughput} we focus on the latency objective (feeding one sample) and throughput objective (pipelining multiple samples), respectively.
We present efficient algorithms that find optimal splits for both objectives.
In \cref{sec:experiments,sec:latency_experiments} we present our evaluation results
for the throughput objective and the latency objective, respectively.

\section{Related Work}
\label{sec:related_work}

In the context of DNN workloads, model partitioning across different devices
has mostly been a manual process driven by human experts.
Most prior work on \emph{automated} device placement falls into two broad categories.

The first category comprises methods that treat the objective function (i.e., latency or throughput) as a black box.
These works use heuristics, mostly based on reinforcement learning, to find partitions for a given workload (Mirhoseini et al.~\cite{dean2017rlplacement, hierarchical-2018}, Spotlight~\cite{spotlight}) or learn a placement policy that can then be adjusted for new workloads via transfer learning (Placeto~\cite{placeto}, GDP~\cite{gdp}) or used to bootstrap a genetic algorithm (REGAL~\cite{regal}).
Unfortunately, these methods are computationally expensive, as they need to evaluate large numbers of placements, each of which entails a reconfiguration of the deployed devices (for a new DNN split) and measuring the runtime of several inference/training steps.
For instance, ~\cite{dean2017rlplacement} requires 12--27 hours of training time \emph{on the target system} to partition modern workloads; \cite{hierarchical-2018} requires 12 GPU hours.
For this reason, some systems (Placeto~\cite{placeto}, FlexFlow~\cite{jia2018beyond}) resort to implementing a simulator to evaluate the objective.
%
%Crucially, these methods have so far only been applied to placements on easily reprogrammable \emph{GPUs}.
%
%Such dynamic techniques can be infeasible for platforms such as FPGAs, CGRAs, and ASICs, where changes to the placement can potentially require partial resynthesis or reconfiguration.

Works in the second category -- including ours -- build a cost model that closely reflects real performance, and then algorithmically solve the resulting ``offline'' optimization problem of finding good partitions and schedules.
This includes classic results in scheduling on multiple machines/devices
\cite{lawler1993sequencing,GrahamListScheduling1966,KernighanLin,PapadimitriouY90,SkutellaW99,ShmoysT93}, as well as modern DNN scheduling works (OptCNN~\cite{jia2018exploring}, PipeDream's~\cite{narayanan2018pipedream} optimizer).
Such algorithms use \emph{profiled} compute time of each node (layer or operator) and data-transfer requirements between nodes in a graph, and the target deployment system infrastructure such as machine and network properties (e.g.~measured bandwidths).
Such techniques do not evaluate the performance of splits in an online fashion.
Nevertheless, it has been demonstrated that for well-defined cost models the objective function closely matches real performance (PipeDream~\cite[Figure 15]{narayanan2018pipedream}, FlexFlow~\cite[Figure 11]{jia2018beyond}, OptCNN~\cite[Table 4]{jia2018exploring}).
Our throughput maximization model in \cref{sec:throughput} generalizes the cost model used in PipeDream~\cite{narayanan2018pipedream}, and our latency minimization objective (\cref{sec:latency}) is similar to the cost model of FlexFlow's simulator~\cite{jia2018beyond}.
In terms of approach, both OptCNN~\cite{jia2018exploring} and FlexFlow~\cite{jia2018beyond} optimize over different dimensions than our methods, opting for more local parallelization strategies.

\paragraph{Pipelining.}
GPipe~\cite{huang2019gpipe} and PipeDream~\cite{narayanan2018pipedream} introduce \emph{pipelined} model-parallelism for training.
Given that this prior work has already shown the efficacy of pipeline parallel training on statistical efficiency (training progress compared to data-parallel training), the focus of this paper is instead on efficient algorithms to effectively partition DNN models across accelerators.
For finding good DNN splits, GPipe presents no algorithm, and PipeDream proposes a method limited to layer graphs that are linear (i.e., a path).
Efficiently finding optimal splits for pipelined execution in a general-DAG setting for both training and inference is a central contribution of this paper.

\section{Computational Model}
\label{sec:model}

\paragraph{Input.}
We consider a heterogeneous system with $k$ DNN hardware accelerators
and $\ell$ CPUs. 
For simplicity of exposition we assume all accelerators to be of the same type (such as GPU, FPGA, or TPU) for a single input.
Every such accelerator has a capacity limit for its associated memory, denoted by $\maxfpgasize$.
We refer to both CPUs and accelerators as \emph{devices}.
The rest of the input to our algorithms consists of a directed acyclic graph (DAG) $G = (V,E)$ with associated weights:

\begin{itemize}%[noitemsep,topsep=0pt,leftmargin=*]
    \item The set $V$ of nodes represents operators such as \texttt{MatMul}, \texttt{Add}, \texttt{ReLu}, etc. (for operator graphs), or layers such as \texttt{MaxPool2d} or \texttt{LSTM} (for layer graphs).
    Each node $v$ has an associated time $\cpu{v}$ required to process $v$ on a CPU, as well as the processing time $\fpga{v}$ of $v$ on an accelerator.\footnote{If $v$ is not supported on the accelerator, we set $\fpga{v} = \infty$.}
    Each node also has a size $\mem{v}$: the memory usage of its associated weights and activations.
    
    \item The set $E$ of directed edges encodes dependency/precedence constraints: an edge $(u,v)$ implies that the operation $v$ depends on the result of $u$. 
    Each node $u$ has a communication cost $\comm{u}$, which corresponds to the time required to transfer $u$'s output between CPU DRAM
    (henceforth referred to as RAM)
    and the accelerator's memory, say through a PCIE bus.
    Crucially, this cost is paid only if $u$ and $v$ are placed on different devices: if $u$ is on an accelerator, it needs to write this output to RAM, and if $v$ is on an accelerator, it needs to read this input from RAM.
    We ignore the cost of reading or writing to RAM from CPUs.
\end{itemize}

\paragraph{Output.}
We seek to assign each node in the graph to exactly one device so that for every accelerator the sum of sizes $\mem{v}$ of nodes assigned to it does not exceed its capacity~$\maxfpgasize$.
Out of all feasible partitions we want to select one that optimizes a metric of interest (latency or throughput).

\begin{figure}
\centering
\subfloat[Contiguous split]{
\includegraphics[width=0.46\textwidth]{./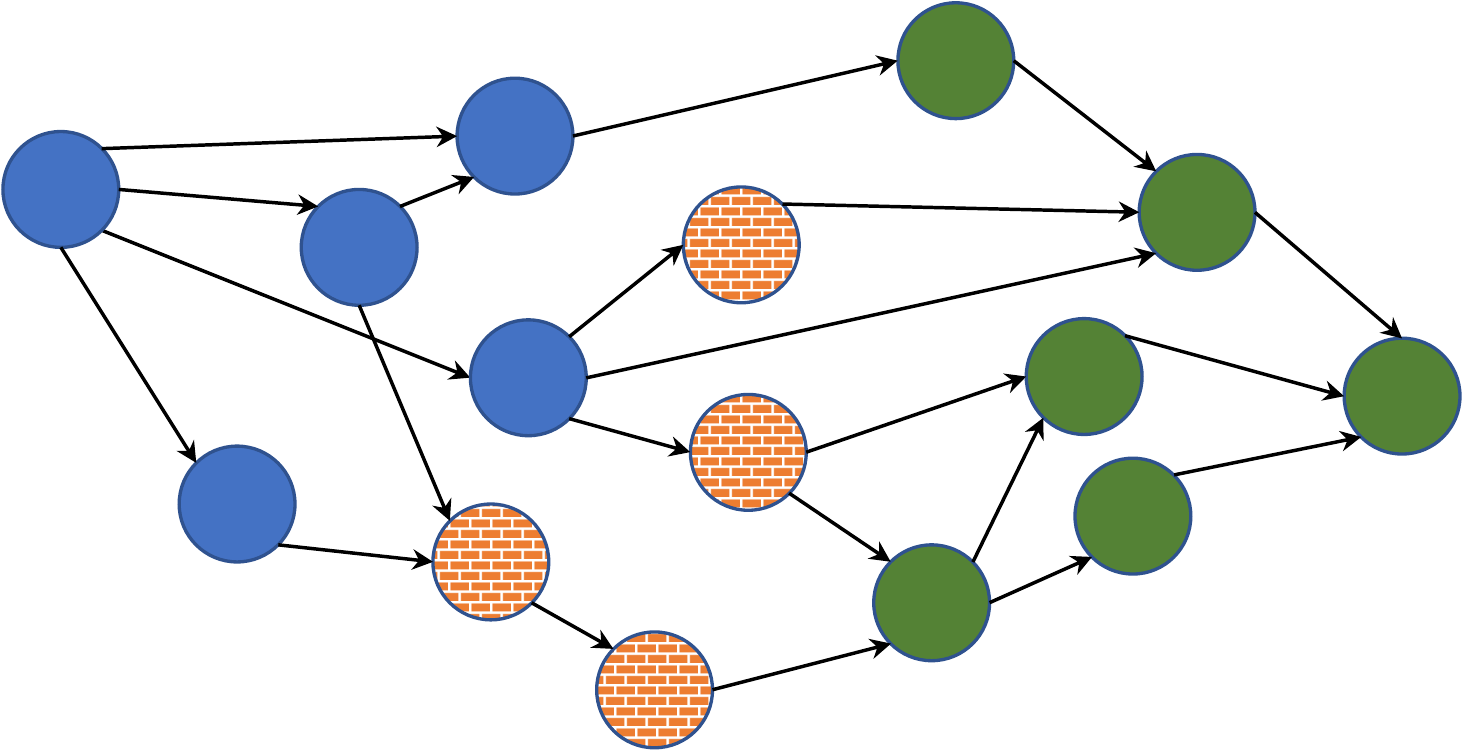}
\label{fig:contig_split}}
\subfloat[Non-contiguous split]{
\includegraphics[width=0.46\textwidth]{./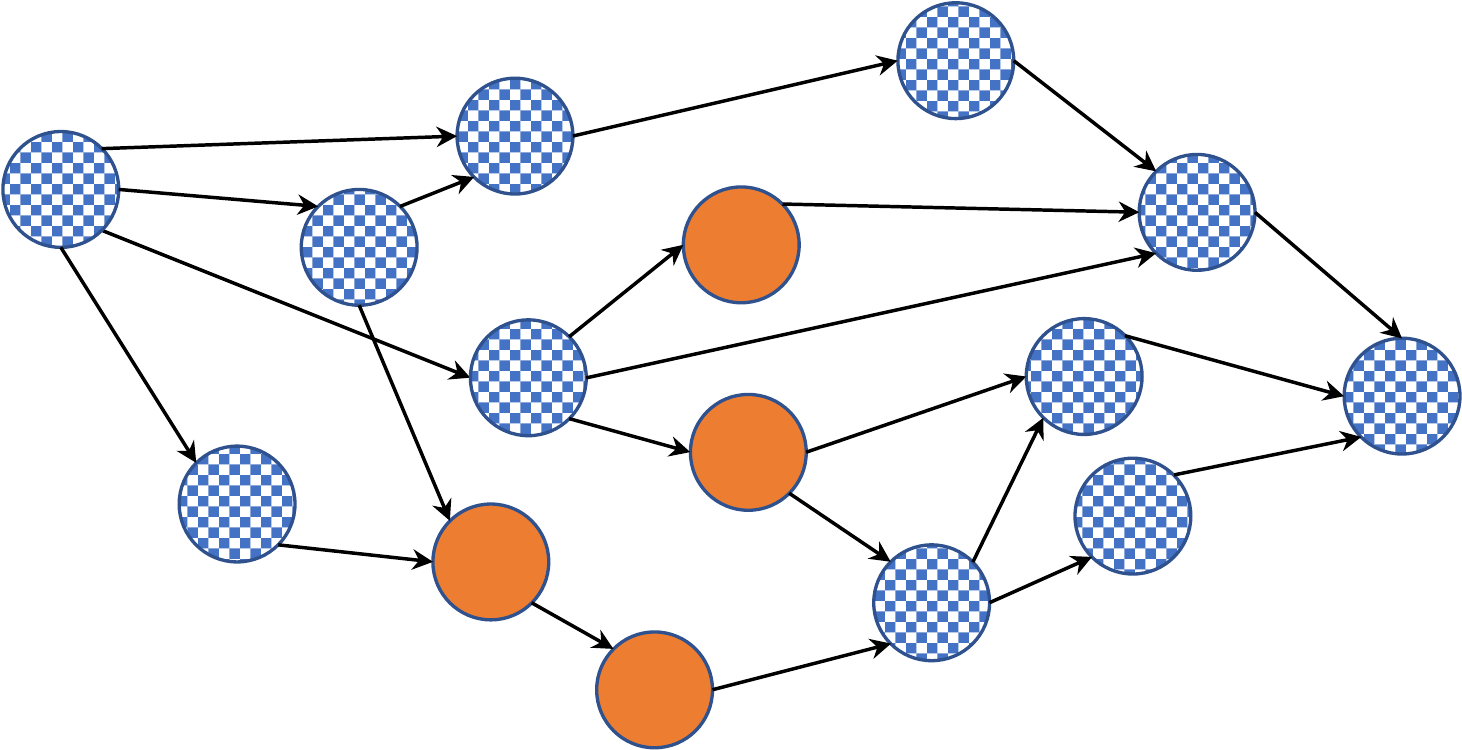}
\label{fig:ncontig_split}}
\caption{(a) Contiguous and (b) non-contiguous splits. Note that the brick-patterned orange nodes in~(a) form a contiguous subgraph despite not being connected, and checked blue nodes in~(b) form a non-contiguous subgraph despite being connected.}
\label{fig:contiguous}
%\vspace{-3ex}
\end{figure}
%\vspace{-1ex}

\paragraph{Contiguous and non-contiguous subgraphs of computation.}
By default, we desire every device to hold a contiguous fragment of the DNN:
\begin{definition}
\label{def:contiguous}
    We say that a set $S \subseteq V$ is \emph{contiguous} if there do \textbf{not} exist nodes $u \in S$, $v \in V \setminus S$, and $w \in S$ such that $v$ is reachable from $u$ and $w$ is reachable from $v$.
\end{definition}

See \cref{fig:contiguous} for an example.
This property enables subgraphs to be invoked in an uninterrupted way:
all required inputs can be transferred to the accelerator at one time,
after which it performs computations and produces all its outputs.
This allows for simpler system implementations and less interactivity
%between the accelerator and the rest of the pipeline.
with the accelerator.\footnote{
    In particular,
    this way of invoking subgraphs of computation on accelerators is motivated by production systems at Microsoft~\cite{bw-isca-2018,bw-micro-2018}, where there is no state maintained {across} any two subgraph invocations other than subgraph model parameters.
}

However, in this work we also explore non-contiguous splits, where the subgraphs placed on an accelerator can be arbitrary.
In particular, we explain how to build a pipelined schedule for executing such a split for a stream of many samples,
%(this will involve breaking up arbitrary subgraphs into contiguous ones to be invoked as above -- see Section~\ref{sec:non-contiguous-splits}),
and how to find an optimal split of this more general form.

%\vspace{-0.8em}
\section{Inference and Latency Minimization}
\label{sec:latency}
%\vspace{-0.4em}

\begin{figure}
\centering
\subfloat[Model-parallel inference schedule]{
\includegraphics[width=0.5\textwidth]{./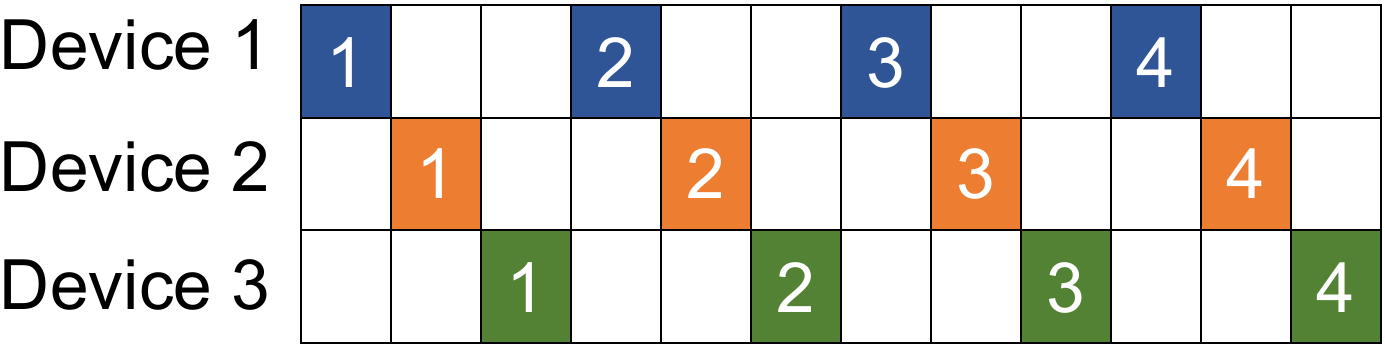}
\label{fig:inf_mp}}
\subfloat[Model-parallel training schedule]{
\includegraphics[width=0.5\textwidth]{./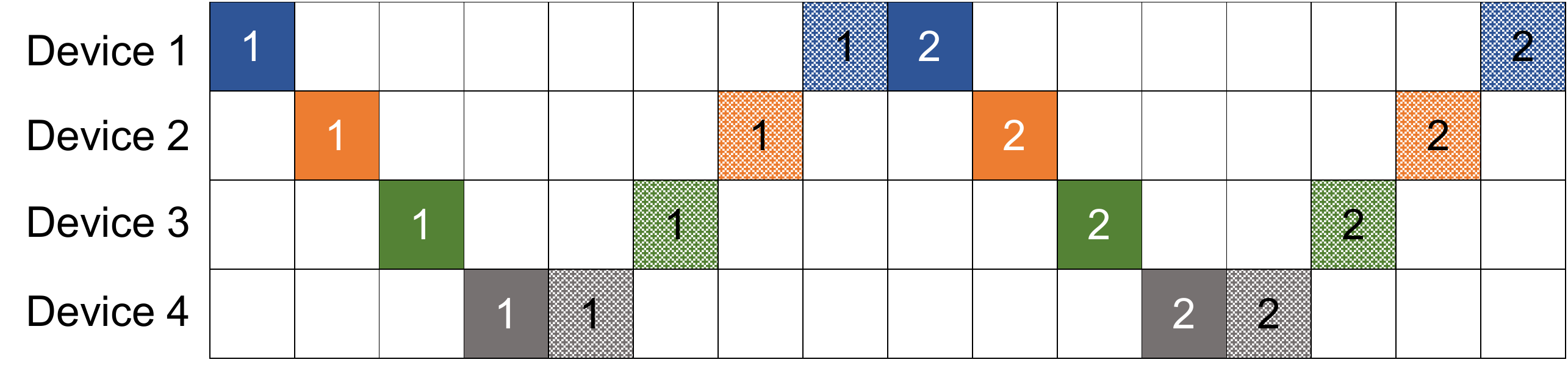}
\label{fig:train_mp}}
\caption{(a) Single-stream model-parallel inference and (b) Model-parallel training schedule with darker shades for forward pass and lighter for  backward. The $x$-axis is time, and numbers 1--4 are input minibatch identifiers. Different colors represent different devices.}
\label{fig:mp}
%\vspace{-3ex}
\end{figure}
%\vspace{-1ex}

In this section we focus on the task of DNN inference in the non-pipelined setting, i.e., when one sample is fed at a time (see \cref{fig:mp}).
The objective here is latency, i.e., the time to produce the final output.
Here, model parallelism is required and/or assists in the following  two ways.
First, the model might not fit in the memory of a single accelerator,
making the split necessary.
Second, it enables us to exploit the parallelism inherent in the model: if two operators are independent, they can be processed simultaneously if placed on different devices.
We present an \emph{Integer Programming} based solution for this setting.

\paragraph{Mode of computation.}
In the setting of latency minimization,
we assume the way of invoking contiguous subgraphs of computation
that we mentioned above (in \cref{sec:model}).
Specifically, an accelerator, which is assigned a subgraph $S \subseteq V$ of nodes, can be \emph{invoked} when all of its required inputs are ready in DRAM
(these are outputs of nodes not in $S$ but with an edge to $S$).
Once invoked, the accelerator transfers this data to its memory.
Next, it processes operations $v \in S$ (in some sequential order).
Finally, it transfers the results back to DRAM (these are outputs of nodes in $S$ with an edge leaving $S$).
This uninterrupted mode of execution is made possible by $S$ being contiguous.

Another mild assumption we make to streamline the Integer Programming formulation is that the number $\ell$ of CPU cores is no smaller than the \emph{width} of $G$, i.e., the maximum number of nodes that can feasibly be processed in parallel.\footnote{Formally, $\ell$ is larger than any \emph{antichain}: a set $A \subseteq V$ of nodes such that for any $u,v \in A$, $u$ is not reachable from $v$.}

\paragraph{Our formulation.}

\begin{figure*}[t]
    \begin{alignat}{4}
        \text{min} \quad && \TotalLatency \nonumber \\
        \text{s.t.} \quad && \sum\nolimits_{i=0}^{k} x_{vi} \ & = 1 & \quad & (\forall v) \label{con:assigned} \\
        && \text{subgraph } &\{ v \in V : x_{vi} = 1 \}  \text{ is contiguous} && (\forall i=1,...,k) \label{con:contiguous} \\
        && \maxfpgasize & \ge \sum\nolimits_v \mem{v} \cdot x_{vi} && (\forall i=1,...,k) \label{con:memory} \\
        && \Incoming{ui} & \ge x_{vi} - x_{ui} && (\forall (u,v) \in E) \ (\forall i=1,...,k) \label{con:incoming} \\
        && \Outgoing{ui} & \ge x_{ui} - x_{vi} && (\forall (u,v) \in E) \ (\forall i=1,...,k) \label{con:outgoing} \\
        && \TotalLatency & \ge \Latency{v} && (\forall v) \nonumber \\
        && \Start{i} & \ge \Latency{v} \cdot \Incoming{vi} && (\forall v) \ (\forall i=1,...,k) \label{con:latency_incoming} \\
        && \Finish{i} & = \Start{i} 
        + \sum\nolimits_v \Incoming{vi} \cdot \comm{v} \nonumber \\
        &&& + \sum\nolimits_v x_{vi} \cdot \fpga{v}
        + \sum\nolimits_v \Outgoing{vi} \cdot \comm{v} && (\forall i=1,...,k) \label{con:starttofinish} \\
        && \Latency{v} & \ge x_{v0} \cdot \cpu{v} && (\forall v) \label{con:latency1} \\
        && \Latency{v} & \ge x_{v0} \cdot \cpu{v} + \Latency{u} && (\forall (u,v) \in E) \label{con:latency2} \\
        && \Latency{v} & \ge x_{vi} \cdot \Finish{i} && (\forall v) \ (\forall i=1,...,k) \label{con:latency_finish} \\
        && x_{vi} & \in \{0,1\} && (\forall v) \ (\forall i=0,...,k) \nonumber
    \end{alignat}
    \caption{A schema of the Integer Program for latency minimization}
    \label{fig:ip_for_latency}
    %\vspace{-2ex}
\end{figure*}

Our IP formulation is presented in \cref{fig:ip_for_latency}.
Devices/subgraphs of accelerators are indexed $i=1,...,k$, and the special index $i=0$ denotes all CPU cores together.
We use binary variables $x_{vi}$ to denote whether node $v$ should be placed on device/subgraph $i$, and continuous variables $\Latency{v}$ to denote the time at which node $v$ has finished executing and its output (or that of the subgraph where it is placed) is available in RAM. 
The objective $\TotalLatency$ is the maximum of $\Latency{v}$ over all nodes $v$.
All variables except $x_{vi}$ are bound to be non-negative (i.e., not necessarily integral).
We explain the remaining variables and constraints below:
\begin{itemize}%[noitemsep,topsep=0pt,leftmargin=*]
    \item The variable $\Incoming{ui}$ is intended to be $1$ if $u$ is not in subgraph $i$, but has an edge to it (and $0$ otherwise). In this case its output needs to be transferred to the corresponding accelerator's memory. This is encoded by constraint~\eqref{con:incoming}.
    \item Similarly $\Outgoing{ui}$ should be $1$ if $u$ is in subgraph~$i$ and has an edge going out of $i$. In this case its output needs to be transferred from the corresponding accelerator to RAM. This is encoded by constraint~\eqref{con:outgoing}.
    \item For a subgraph $i$, $\Start{i}$ is the time at which all its inputs are ready in RAM (not the accelerator's memory). This is encoded by constraint~\eqref{con:latency_incoming}. $\Finish{i}$ is the time at which all its outputs are ready and have been transferred to RAM. Constraint~\eqref{con:starttofinish} relates the two by taking into account the in-transfer, processing inside the subgraph, and the out-transfer.
    \item Constraint~\eqref{con:assigned} means that every node should be assigned to exactly one subgraph (or a CPU).
    \item Constraint~\eqref{con:memory} encodes the requirement that the sum of sizes of nodes on accelerator $i$ should be at most $\maxfpgasize$.
    \item Constraints~\eqref{con:latency1} and \eqref{con:latency2} encode that node $v$ can start processing once all of its predecessors $u$ are finished.
    If $v$ is placed on a CPU, then its processing takes $\cpu{v}$ time.
    Otherwise, its processing time is taken into account in constraint~\eqref{con:starttofinish} of the subgraph $i$ where it is placed; the outputs of $i$ are available at time $\Finish{i}$,
    and constraint~\eqref{con:latency_finish} will set $\Latency{v}$ to that value.
\end{itemize}

Note that the formulation as presented in \cref{fig:ip_for_latency} is not yet a Mixed-Integer Program (MIP) -- but can be made so.

\begin{lemma} \label{lem:reformulate}
    The constraints \eqref{con:contiguous}, \eqref{con:latency_incoming} and \eqref{con:latency_finish} can be reformulated as linear constraints.
\end{lemma}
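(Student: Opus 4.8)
The plan is to handle each of the three nonlinear constraints in turn, replacing every product of a variable with another variable (or a nonconvex set-membership predicate) by an equivalent system of linear inequalities, exploiting the fact that the $x_{vi}$ are binary and that all latency-type variables are bounded. Throughout I will use a ``big-$M$'' constant, i.e. a finite upper bound $T$ on any $\Latency{v}$, $\Start{i}$, or $\Finish{i}$; such a bound exists because in any feasible solution the total latency is at most $\sum_v (\cpu{v} + \fpga{v}) + 2\sum_u \comm{u}$, so I may set $T$ to this value and treat it as a constant in the reformulation.

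First I would deal with the contiguity predicate \eqref{con:contiguous}. The key observation is that \cref{def:contiguous} forbids, for each accelerator $i$, a configuration $u \in S_i$, $v \notin S_i$, $w \in S_i$ with $w$ reachable from $v$ and $v$ reachable from $u$. Since reachability in the DAG $G$ is fixed input data (not a decision variable), I can precompute the transitive closure and enumerate all ``bad triples'' $(u,v,w)$ with $u \rightsquigarrow v \rightsquigarrow w$. For each such triple and each $i$, forbidding $x_{ui}=1, x_{vi}=0, x_{wi}=1$ is exactly the linear constraint $x_{vi} \ge x_{ui} + x_{wi} - 1$. An equivalent and more compact encoding introduces, for each node $v$ and subgraph $i$, a ``reach indicator'' and enforces transitivity, but the direct triple-based inequality is the cleanest to state; I would present that one. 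This step is conceptually the subtle one because it requires recognizing that contiguity, despite looking like a global combinatorial condition, decomposes into finitely many local implications that are each a single linear inequality over binaries.

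Next I would linearize constraint~\eqref{con:latency_incoming}, namely $\Start{i} \ge \Latency{v}\cdot \Incoming{vi}$, which is a product of a continuous variable $\Latency{v}$ with the (effectively binary) variable $\Incoming{vi}$. The standard trick applies: I replace it by $\Start{i} \ge \Latency{v} - T\,(1 - \Incoming{vi})$. When $\Incoming{vi}=1$ this forces $\Start{i} \ge \Latency{v}$ as desired, and when $\Incoming{vi}=0$ the right-hand side is at most $0$ and the constraint is vacuous, exactly matching the intended product semantics (here I use that $\Incoming{vi}$ takes values in $\{0,1\}$ at optimum, which follows from \eqref{con:incoming} together with the minimization pressure). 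Constraint~\eqref{con:latency_finish}, $\Latency{v} \ge x_{vi}\cdot \Finish{i}$, is the product of the genuinely binary $x_{vi}$ with the continuous $\Finish{i}$, and is handled identically by $\Latency{v} \ge \Finish{i} - T\,(1 - x_{vi})$.

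I expect the contiguity reformulation to be the main obstacle, both because it is the only one of the three that is not a textbook bilinear-term linearization and because one must argue that the finite family of triple inequalities is \emph{equivalent} to \cref{def:contiguous}, not merely implied by it --- i.e. that any assignment satisfying all the inequalities yields contiguous subgraphs. That direction is immediate from the definition (a violation of contiguity produces a witnessing triple, which violates the corresponding inequality), so the argument is short once the right inequalities are written down, but identifying them is where the thought goes. The two product constraints, by contrast, are routine big-$M$ linearizations and I would dispatch them quickly, taking care only to note explicitly why the relevant multiplier variable is binary so that the big-$M$ relaxation is tight.
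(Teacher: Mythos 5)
Your proof is correct, and for the two bilinear constraints \eqref{con:latency_incoming} and \eqref{con:latency_finish} it is essentially identical to the paper's: the same big-$M$ linearization $\Start{i} \ge \Latency{v} - T(1-\Incoming{vi})$, with your explicit choice of $T$ and your remark on why $\Incoming{vi}$ is effectively binary being, if anything, slightly more careful than the paper's informal ``very large $H$.'' Where you genuinely diverge is the contiguity constraint \eqref{con:contiguous}. You precompute the transitive closure and, for every ``bad triple'' $u \rightsquigarrow v \rightsquigarrow w$ and every accelerator $i$, add $x_{vi} \ge x_{ui} + x_{wi} - 1$; this is a correct and exactly equivalent encoding, and its correctness is immediate in both directions since the inequalities forbid precisely the violating configurations of \cref{def:contiguous}. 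The paper instead introduces auxiliary variables $z_{vi}$ (morally, indicators that some node of subgraph $i$ is reachable from $v$) with only edge-local constraints $z_{vi} \ge x_{vi}$, $z_{vi} \le z_{ui}$, $z_{vi} \le x_{vi} - x_{ui} + 1$ for $(u,v) \in E$, and proves equivalence by a short path-following argument. The trade-off: your encoding needs no auxiliary variables and no equivalence proof beyond inspection, but it uses $\Theta(k \cdot \#\text{triples})$ constraints, which can be $\Theta(|V|^3 k)$; the paper's encoding keeps the formulation at $O(|V|k)$ variables and $O((|V|+|E|)k)$ constraints, which is what lets the paper state those size bounds right after the lemma and matters for solver performance on the graph sizes in their experiments. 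Both establish the lemma as stated.
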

\begin{proof}
    To reformulate \eqref{con:latency_incoming},
    take $H$ to be a very large number (guaranteed to be larger than $\Latency{v}$ in any considered solution)
    and write
    \[ \Start{i} \ge \Latency{v} - (1 - \Incoming{vi}) \cdot H \,. \]
    If $\Incoming{vi} = 1$, then we recover the original constraint.
    Otherwise, if $\Incoming{vi} = 0$, the right-hand side is negative and the constraint becomes vacuous.
    Constraint \eqref{con:latency_finish} can be rewritten analogously.

    To formulate the contiguity constraint \eqref{con:contiguous}, we use extra variables $z_{vi}$, with the following linear constraints:
    \begin{align}
        z_{vi} &\ge x_{vi} \quad & (\forall v) \ (\forall i=1,...,k) \label{con:z} \\
        z_{vi} &\le z_{ui} \quad & (\forall (u,v) \in E) \ (\forall i=1,...,k) \label{con:z2} \\
        z_{vi} &\le x_{vi} - x_{ui} + 1 \quad & (\forall (u,v) \in E) \ (\forall i=1,...,k) \label{con:z3}
    \end{align}
    Intuitively, one can think of $z_{\cdot i}$ as being a non-increasing sequence that lays above $x_{\cdot i}$.
    
    Fix $i$.
    We claim that the subgraph $S = \{v \in V : x_{vi} = 1\}$ is contiguous if and only if there exists a vector $(z_{vi})_{v \in V}$ satisfying constraints \eqref{con:z}--\eqref{con:z3}.
    
    \textbf{"Only if" direction:} for every $v$ define $z_{vi} = 1$ if any node in $S$ is reachable from $v$, and $0$ otherwise. Constraints \eqref{con:z} and \eqref{con:z2} are clearly satisfied.  For constraint \eqref{con:z3}, the only interesting case is when $x_{vi} = 0$ and $x_{ui} = 1$; then the constraint becomes $z_{vi} \le 0$. This is indeed satisfied as no node $w \in S$ can be reachable from $v$; if it were, then the triple $(u,v,w)$ would contradict the contiguity of $S$ (cf. \cref{def:contiguous}).
    
    \textbf{"If" direction:} 
    towards a contradiction assume that there are nodes $u \in S$, $v \not \in S$ and $w \in S$ such that $v$ is reachable from $u$ and $w$ is reachable from $v$.
    Without loss of generality assume that $(u,v) \in E$.
    Then $z_{vi} \le 0$ by constraint~\eqref{con:z3}.
    By following the path from $v$ to $w$ and repeatedly applying constraint~\eqref{con:z2} we get $z_{wi} \le z_{vi}$, thus $z_{wi} \le 0$.
    But by constraint~\eqref{con:z} we must also have $z_{wi} \ge 1$ since $w \in S$, a contradiction.
\end{proof}

Our formulation has $O(|V| \cdot k)$ variables and $O((|V| + |E|) \cdot k)$ constraints.

\subsection{Non-contiguous splits}
\label{sec:ip_for_latency_noncontig}

Our formulation can be extended to allow every accelerator to hold
up to some number $\singlesubgraphcount$ of contiguous subgraphs.
We then need to ensure that their processing times in our schedule do not overlap.

We use a modified Integer Program that provides for a customizable extent of non-contiguity.
Here, an accelerator can be assigned several subsets of nodes $S \subseteq V$,
each of which we will call a \emph{subgraph}.
The mode of computation described 
at the beginning of \cref{sec:latency}
is used for every subgraph.
We require every subgraph to be
a contiguous set $S$ of nodes.

We index devices/subgraphs as follows.
For each accelerator $i=1,...,k$ we create $\singlesubgraphcount$ subgraph slots
indexed $j=(i-1) \singlesubgraphcount + 1,(i-1) \singlesubgraphcount + 2, ..., i \singlesubgraphcount$,
where $\singlesubgraphcount$ is a customizable degree of non-contiguity that can be adjusted for the workload at hand.
The special index $j=0$ will denote all CPU cores together.

The modified IP formulation is given in \cref{ip_for_latency_noncontig}.

\begin{figure*}
    \begin{alignat}{4}
        \text{min} \quad && \TotalLatency \nonumber \\
        \text{s.t.} \quad && \sum\nolimits_{j=0}^{\subgraphcount} x_{vj} \ & = 1 & \quad & (\forall v) \tag{\ref{con:assigned}} \\
        && \text{subgraph } &\{ v \in V : x_{vj} = 1 \} \text{ is contiguous} && (\forall j>0) \tag{\ref{con:contiguous}} \\
        && \maxfpgasize & \ge \sum\nolimits_v \mem{v} \cdot \sum\nolimits_{j=(i-1) \singlesubgraphcount + 1}^{i \singlesubgraphcount} x_{vj} && (\forall i=1,...,k) \tag{\ref{con:memory}*} \\
        && \Incoming{uj} & \ge x_{vj} - x_{uj} && (\forall (u,v) \in E) \ (\forall j>0) \tag{\ref{con:incoming}} \\
        && \Outgoing{uj} & \ge x_{uj} - x_{vj} && (\forall (u,v) \in E) \ (\forall j>0) \tag{\ref{con:outgoing}} \\
        && \TotalLatency & \ge \Latency{v} && (\forall v) \nonumber \\
        && \Start{j} & \ge \Latency{v} \cdot \Incoming{vj} && (\forall v) \ (\forall j>0) \tag{\ref{con:latency_incoming}} \\
        && \Finish{j} & = \Start{j} 
        + \sum\nolimits_v \Incoming{vj} \cdot \comm{v} \nonumber \\
        &&& + \sum\nolimits_v x_{vj} \cdot \fpga{v}
        + \sum\nolimits_v \Outgoing{vj} \cdot \comm{v} && (\forall j>0) \tag{\ref{con:starttofinish}} \\
        && \Latency{v} & \ge x_{v0} \cdot \cpu{v} && (\forall v) \tag{\ref{con:latency1}} \\
        && \Latency{v} & \ge x_{v0} \cdot \cpu{v} + \Latency{u} && (\forall (u,v) \in E) \tag{\ref{con:latency2}} \\
        && \Latency{v} & \ge x_{vj} \cdot \Finish{j} && (\forall v) \ (\forall j>0) \tag{\ref{con:latency_finish}} \\
        && \Start{j} & \ge \Finish{j-1} && (\forall j>0, j \ne 1 \text{ mod $\singlesubgraphcount$}) \label{con:finishtostart} \\
        && x_{vj} & \in \{0,1\} && (\forall v) \ (\forall j) \nonumber
    \end{alignat}
    \caption{A schema of the Integer Program for latency minimization  (non-contiguous splits: $\singlesubgraphcount$ contiguous subgraphs per accelerator).}
    \label{ip_for_latency_noncontig}
\end{figure*}

We discuss the constraints that differ from the contiguous version:

\begin{itemize}
    \item Constraint~(\ref{con:memory}*) encodes the requirement that the sum of sizes of nodes in all subgraphs that are placed on accelerator $i$ should be at most $\maxfpgasize$.
    \item Constraint~\eqref{con:finishtostart} arises because an accelerator $i$ cannot process more than one subgraph at a time. Therefore we order its subgraphs $j = (i-1) \singlesubgraphcount + 1, ..., i \singlesubgraphcount$ by the time when they are processed.
\end{itemize}

Finally, if collocation constraints are required (e.g.~for training), then they should be expressed in terms of devices rather than subgraphs.
That is, for two nodes $u$ and $v$ that should be collocated, we write $x_{u0} = x_{v0}$ 
and
for $i=1,...,k$,
$\sum_{j=(i-1)\singlesubgraphcount + 1}^{i \singlesubgraphcount} x_{uj} = \sum_{j=(i-1)\singlesubgraphcount + 1}^{i \singlesubgraphcount} x_{vj}$.

Our formulation has $O(|V| \cdot q \cdot k)$ variables and $O((|V| + |E|) \cdot q \cdot k)$ constraints.

\subsection{Non-pipelined model-parallel training}
The algorithm described above can be directly applied to the traditional setting of model-parallel \emph{training} with no pipelining (one sample at a time, as shown in \cref{fig:train_mp}).
In this case the computation graph contains a forward-pass part followed by a backward-pass part.
A natural extra requirement is that corresponding forward and backward nodes be placed on the same device, as they operate on the same set of weights.
It is easy to express this co-location constraint: for forward and corresponding backward nodes $u$ and $v$ we require $x_{ui} = x_{vi}$ for all $i$.
The contiguity constraint (see \cref{sec:ip_for_latency_noncontig} below) should be enforced separately for the forward and the backward parts.

%\vspace{-0.8em}
\section{Throughput Maximization}
\label{sec:throughput}
%\vspace{-0.4em}

The next goal of this work is to provide an algorithm for the setting where the DNN handles a steady stream of samples and the metric of interest is \emph{throughput}.
For simplicity we think that there are $n \to \infty$ samples to be processed offline.
A schedule of choice in this scenario is model parallelism with \emph{pipelining}.
Without pipelining, only one device is active at any given time (see \cref{fig:inf_mp,fig:train_mp}), which leads to under-utilization of resources.
We remark that pipelining \emph{schedules} that we discuss are essentially due to prior works~\cite{huang2019gpipe,narayanan2018pipedream},
which discuss their implementation aspects, statistical efficiency,
and demonstrate large real-world gains in time-to-accuracy.
Here we focus on \emph{algorithms} to find optimal \emph{splits} for this mode of execution.

We begin by introducing our techniques in the setting of \emph{pipelined inference}, as it is simpler yet allows us to present the main ideas.
Next, we will extend them to handle \emph{training} workloads.

\begin{figure}
\centering
\subfloat[Pipeline with contiguous splits]{
\includegraphics[width=0.5\textwidth]{./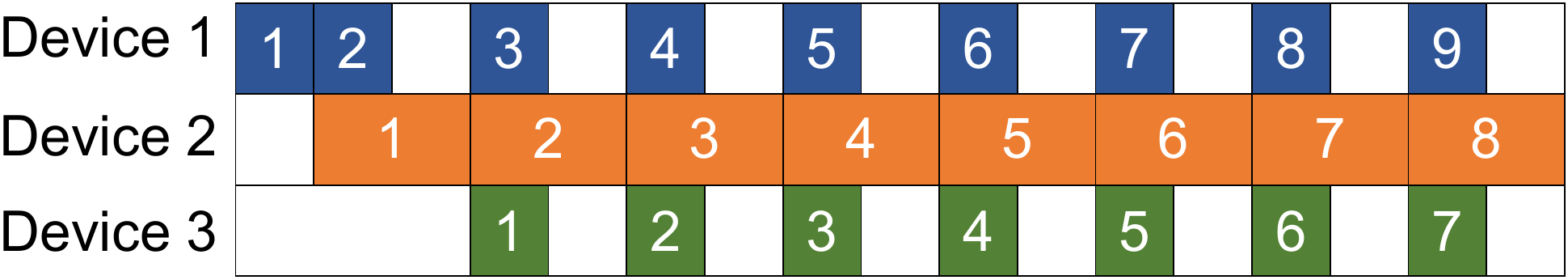}
\label{fig:inf_pipe}}
\subfloat[Pipeline with non-contiguous splits]{
\includegraphics[width=0.5\textwidth]{./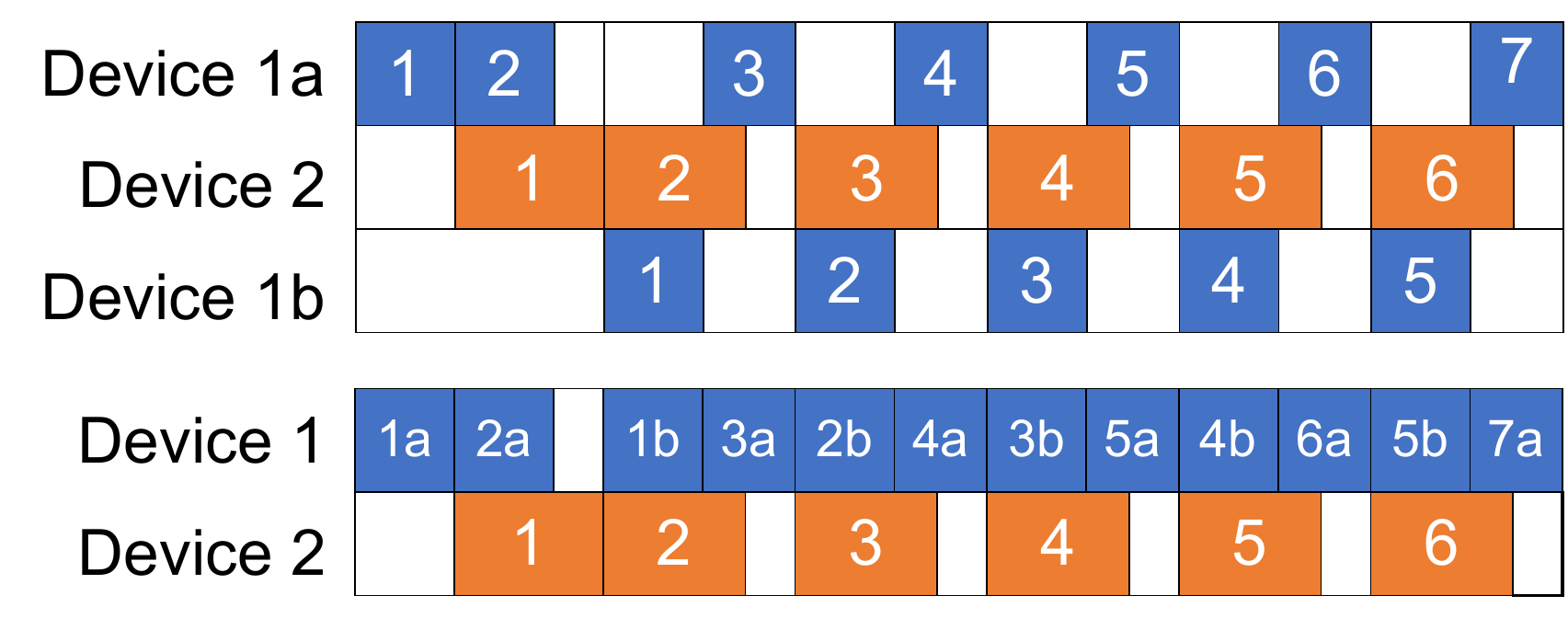}
\label{fig:inf_pipe_noncontiguous}}
\caption{Pipelined inference.  In these figures $x$-axis is time, rectangle widths are device loads (processing times for a sample), and numbers 1--9 are input sample/minibatch identifiers.  The average time spent per sample is decided by the most loaded and always-busy device. In (b) the non-contiguous subgraph of device 1 can be split into two contiguous subgraphs and thought to be assigned to virtual sub-devices 1a and 1b (that can never be executing at  same time). The top and bottom figures in (b) present two equivalent ways to view this schedule.}
\label{fig:pp}
%\vspace{-3ex}
\end{figure}
%\vspace{-1ex}

\subsection{Inference and throughput maximization}
\label{sec:inference_and_throughput_maximization}

Imagine a DNN that has already been contiguously split into subgraphs per device.
The question we ask is: How do we schedule the execution of many samples so as to maximize the throughput, or equivalently, minimize the average Time-Per-Sample?
To do so, we use pipelined inference, i.e, we build a pipeline out of devices (in the order in which the subgraphs are arranged in the DNN), and we insert consecutive samples into it (see \cref{fig:inf_pipe}).
Time can be viewed as divided into rounds: each sample spends one round on every device. After a short ramp-up period, the pipeline reaches a steady state, in which the duration of every round is determined by the slowest (most loaded) device.
With a batch of $n$ samples, the average Time-Per-Sample becomes just the maximum load of a device (plus a vanishing $O(1/n)$ term for the ramp-up and ramp-down periods).
We remark that this schedule is optimal, in the sense that that this average time cannot be lower: the bottleneck device would need to spend at least ($n$ $\times$ its load) time to process $n$ samples
in \emph{any} schedule.

The above discussion shows that
the best split is one that minimizes the maximum load of a device.
Under this \emph{min-max} objective, it is optimal to balance the load among devices,
in contrast to the \emph{min-sum}-like objective of latency minimization.
Another difference is that
here, when searching for the best split,
we do not need to simultaneously optimize for the best schedule
(which was done for latency minimization using the $\Latency{v}$ variables in \cref{sec:latency})
-- pipelining gives this for free. Without this scheduling aspect, we are left with a partitioning problem, which is easier to solve.

\subsubsection{Dynamic Programming solution}
\label{sec:dp}

The two main ideas behind our Dynamic Programming (DP) solution are described below.
First, if we want {contiguous} splits, then we can carve out successive device-subgraphs starting from the beginning of the network.
At all times, the already-partitioned region will be a downward-closed set that we henceforth call an \emph{ideal}.

\begin{definition}
    We call a set $I \subseteq V$ of nodes an \emph{ideal}
    if for any $(u,v) \in E$ with $v \in I$ we have $u \in I$.
\end{definition}

It turns out that, going from ideal to ideal, we can obtain every possible contiguous subgraph:

\begin{fact}
    \label{fact:contiguous_is_difference_of_two_ideals}
    A set $S \subseteq V$ of nodes is contiguous (see \cref{def:contiguous}) if and only if it is the difference of two ideals: $S = I \setminus I'$ where $I' \subseteq I$.
    %The proof is given in \cref{sec:proof_of_contiguous_is_difference_of_two_ideals}.
\end{fact}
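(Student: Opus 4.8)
The plan is to prove both directions of the biconditional in \cref{fact:contiguous_is_difference_of_two_ideals} separately. Let me recall the definitions: a set $S$ is \emph{contiguous} if there do not exist $u \in S$, $v \in V \setminus S$, $w \in S$ with $v$ reachable from $u$ and $w$ reachable from $v$. An \emph{ideal} $I$ is downward-closed: whenever $(u,v) \in E$ and $v \in I$, then $u \in I$; equivalently (by induction along paths) $I$ is closed under taking predecessors, so if $v \in I$ and $u$ can reach $v$ then $u \in I$.

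For the \textbf{"if" direction}, suppose $S = I \setminus I'$ with $I' \subseteq I$ both ideals. I would argue by contradiction: assume $S$ is not contiguous, so there exist $u \in S$, $v \notin S$, $w \in S$ with $v$ reachable from $u$ and $w$ reachable from $v$. Since $w \in S \subseteq I$ and $w$ is reachable from $v$, downward-closure of $I$ forces $v \in I$. Since $v \notin S = I \setminus I'$ but $v \in I$, we must have $v \in I'$. But $u$ reaches $v$, so $u \in S$ reaches $v \in I'$; wait — I need $I'$ downward-closed in the right way. Here $u$ reaches $v$ and $v \in I'$: downward-closure of $I'$ means predecessors of $v$ lie in $I'$, so $u \in I'$. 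Then $u \in I'$ contradicts $u \in S = I \setminus I'$. This closes the "if" direction cleanly, and the key step is applying downward-closure of $I$ to pull $v$ into $I$, then of $I'$ to pull $u$ into $I'$.

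For the \textbf{"only if" direction}, suppose $S$ is contiguous; I must construct ideals $I' \subseteq I$ with $S = I \setminus I'$. The natural choice is to take $I$ to be the downward-closure of $S$ together with everything needed, and $I'$ to be the part of $I$ that lies outside $S$. Concretely, I would set $I = \{x \in V : x \text{ reaches some node of } S\} \cup S$ — actually the cleanest is $I = S \cup \{x : x \text{ reaches } S \text{ but } x \notin S\}$ organized so that $I$ is the smallest ideal containing $S$, namely $I = \{x : x \text{ reaches some } s \in S\}$ (which contains $S$ since each $s$ reaches itself), and $I' = I \setminus S$. Then $S = I \setminus I'$ holds by construction, and $I$ is an ideal since "reaches $S$" is closed under predecessors. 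The crux is verifying $I' = I \setminus S$ is also an ideal: I take $(u,v) \in E$ with $v \in I'$ and must show $u \in I'$. Since $v \in I' \subseteq I$, $u$ reaches $v \in I \subseteq$ (reaches $S$), so $u \in I$; it remains to show $u \notin S$. If $u \in S$, then $u \in S$ reaches $v$ (via the edge) and $v$ reaches some $w \in S$ (since $v \in I$), while $v \notin S$ (since $v \in I'$) — this is exactly the forbidden configuration $(u,v,w)$ contradicting contiguity of $S$. Hence $u \notin S$, so $u \in I'$, completing the proof.

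I expect the \textbf{main obstacle} to be the "only if" direction, specifically pinning down the right definition of $I$ and $I'$ so that the contiguity hypothesis is invoked at precisely the point where $I'$'s downward-closure is checked. The technical care needed is in handling reachability along arbitrary (possibly multi-edge) paths versus single edges: I would reduce to the single-edge case by noting that any reachability relation decomposes into edges and that ideals are closed under predecessors along paths by an easy induction, so it suffices to check the edge $(u,v)$. Everything else is routine set manipulation.
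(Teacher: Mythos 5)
Your proof is correct and follows essentially the same route as the paper: the "if" direction derives the same contradiction chain ($w \in I \Rightarrow v \in I \Rightarrow v \in I' \Rightarrow u \in I'$), and the "only if" direction uses the identical construction $I = \{x : x \text{ reaches some node of } S\}$, $I' = I \setminus S$, invoking contiguity exactly where the downward-closure of $I'$ is verified. Your explicit remark that edge-level closure extends to path-level closure by induction is a small point the paper leaves implicit, but it is the same argument.
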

\begin{proof}
    \textbf{"Only if" direction:}
    we take $I = \{ v \in V : \text{ some node in $S$ is reachable from $v$}\}$
    and
    $I' = I \setminus S$.
    Clearly $I$ is an ideal and $S = I \setminus I'$;
    it remains to show that $I'$ is an ideal.
    For this, take any edge $(u,v) \in E$ with $v \in I'$;
    we need to show that $u \in I'$.
    Since $v \in I$, we also have $u \in I$.
    It remains to show that $u \not \in S$.
    Assume otherwise, i.e., that $u \in S$.
    Since $v \in I$, some node $w \in S$ is reachable from $v$.
    But $v \not \in S$; thus the triple $(u,v,w)$ contradicts the contiguity of $S$.
    
    \textbf{"If" direction:}
    towards a contradiction assume that there are nodes $u \in S$, $v \not \in S$ and $w \in S$ such that $v$ is reachable from $u$ and $w$ is reachable from $v$.
    We have $w \in S \subseteq I$ and $I$ is an ideal, so $v \in I$.
    Since $v \not \in S = I \setminus I'$, we must have $v \in I'$.
    Since $I'$ is an ideal, also $u \in I'$.
    However, $u \in S = I \setminus I'$, a contradiction.
\end{proof}

%The proof of \cref{fact:contiguous_is_difference_of_two_ideals} is provided in \cref{sec:proof_of_contiguous_is_difference_of_two_ideals}.

General DAGs can contain exponentially many ideals (the worst case being a graph with no edges).
Our second insight is that the operator graphs of most modern DNNs, while less and less linear in structure, still contain a manageable amount of branching.
This topology ensures a limited number of ideals.
Thus, we can consider all possible contiguous sets via Dynamic Programming.
%As can be seen from the results (Section~\ref{sec:experiments}), DP is able to efficiently find optimal splits for a variety of large DNN workloads.

\newcommand{\dpstate}[3]{\mathrm{dp}[#3][#1][#2]} % cpus, fpgas, ideal / ideal, cpus, fpgas

\paragraph{The Dynamic Program.} 
We fill a DP table of dimensions $(k+1) \times (\ell+1) \times (\text{number of ideals in $G$})$, where the cell $\dpstate{k'}{\ell'}{I}$ is intended to hold the optimal (i.e.~smallest) maximum load of a device if we use $k'$ accelerators and $\ell'$ CPUs to partition the set $I \subseteq V$ of nodes.
The initialization, which is $(k',\ell') = (0,0)$, is easy:
the only ideal that we can partition using $0$ devices is the empty set, so we have $\dpstate{0}{0}{I} = 0$ if $I = \emptyset$ and $\infty$ otherwise.
For $(k',\ell') \ne (0,0)$ and any $I$, we iterate over all choices of the subgraph being placed on the last device (which is either a CPU or an accelerator), which are contiguous sets of the form $I \setminus I'$ for an ideal $I' \subseteq I$:

%\vspace{-2ex}
\begin{align*}
    \small \dpstate{k'}{\ell'}{I} = 
    \hspace*{-0.7em}\min_{\text{ideal} I' \subseteq I}\hspace*{-0.7em} \min[\max\left( \dpstate{k'-1}{\ell'}{I'}, \fpganoindex(I \setminus I') \right), 
        \max\left( \dpstate{k'}{\ell'-1}{I'}, \cpunoindex(I \setminus I') \right)]
\end{align*}

with the caveat that if $k' = 0$ or $\ell' = 0$, then we should skip the corresponding branch of the second $\min$.
By $\cpunoindex(S)$ and $\fpganoindex(S)$ we denote the total load of the corresponding device holding the contiguous set $S$; thus $\cpunoindex(S) = \sum_{v \in S} \cpu{v}$, and
$\fpganoindex(S)$ comprises: the incoming communication costs of $S$ ($\sum_v \comm{v}$ over $v \not \in S$ with an edge to $S$), the processing cost $\sum_{v \in S} \fpga{v}$, and the outgoing communication costs of $S$ ($\sum_v \comm{v}$ over $v \in S$ with an edge to $V \setminus S$).
If $S$ would not fit on an accelerator, i.e., $\sum_{v \in S} \mem{v} > \maxfpgasize$, then we instead set $\fpganoindex(S) = \infty$.

\paragraph{Runtime and memory usage.}
The DP table dominates the memory usage, which is
$O(\numideals \cdot (k+1) \cdot (\ell+1))$, where by $\numideals$ we denote the number of ideals in $G$.
It takes $O(\numideals)$ time to fill one entry of the table.
The entire DP solution can be implemented to run in time $O(\numideals^2 \cdot [(k+1) \cdot (\ell+1) + |V| + |E|])$, where the additional term $\numideals^2 \cdot (|V| + |E|)$ arises due to computing the costs $\cpunoindex(I \setminus I')$ and $\fpganoindex(I \setminus I')$.

\paragraph{Extensions.}
A similar DP solution is used in PipeDream~\cite{narayanan2018pipedream},
albeit only for layer-granularity graphs that are linear (i.e.,~a path).
That work also considers two extensions:
replication (where a single subgraph is replicated on multiple devices,
creating a hybrid model-parallel/data-parallel split)
and hierarchical accelerator topologies (e.g. clusters of GPUs connected internally with faster interconnects).
Both of these extensions can also be handled by our DP algorithm,
at the costs of $O(k+\ell)$ and $O(\numideals)$ factors in the runtime, respectively.
See \cref{sec:extensions} for more details.

\subsubsection{Dynamic Programming Solution -- Linearization Heuristic (DPL)}
\label{sec:linearization}

The $\numideals^2$ term in the running time
can make the DP solution inefficient
for certain DNN workloads that are both large and strongly branching.
To deal with this,
one can reduce the search space by adding artificial edges to the graph.
In particular,
we use the following version of this technique:
find a Hamiltonian path
(in other words, a linear/topological ordering)
of the input DAG
using a Depth-First Search (DFS) traversal,
and add this path of artificial edges.
This yields
the largest possible reduction of the search space:
the resulting graph has only one topological ordering,
and thus the number $\numideals$ of ideals 
becomes the number $|V|$  of nodes plus $1$,
giving an $O(|V|^2)$ term instead of $O(\numideals^2)$.
The algorithm so obtained is polynomial-time,
but it may not return the optimal solution.
In Section~\ref{sec:experiments} we show that it is very close to optimal for most workloads and provides a compelling trade-off between solution quality and runtime. We denote it by DPL in that section.

\subsubsection{Integer Programming solution}
\label{sec:inference_ip}

\begin{figure*}[t]
    \begin{alignat}{4}
        \text{min} \quad && \MaxLoad \nonumber \\
        \text{s.t.} \quad && \sum\nolimits_{i=1}^{k+\ell} x_{vi} \ & = 1 & \quad & (\forall v) \label{con:ml_assigned} \\
        && \text{the subgraph } \{ v \in V : x_{vi} = 1 \} & \text{ is contiguous \textbf{(optional)}} && (\forall i)
        \label{con:ml_contiguous} \\
        && \maxfpgasize & \ge \sum\nolimits_v \mem{v} \cdot x_{vi} && (\forall i=1,...,k) \label{con:ml_memory} \\
        && \Incoming{ui} & \ge x_{vi} - x_{ui} && (\forall (u,v) \in E) \ (\forall i=1,...,k) \label{con:ml_incoming} \\
        && \Outgoing{ui} & \ge x_{ui} - x_{vi} && (\forall (u,v) \in E) \ (\forall i=1,...,k) \label{con:ml_outgoing} \\
        && \MaxLoad & \ge \Load{i} && (\forall i) \nonumber \\
        && \Load{i} = \sum\nolimits_v \Incoming{vi} \cdot \comm{v} & + \sum\nolimits_v x_{vi} \cdot \fpga{v}
        + \sum\nolimits_v \Outgoing{vi} \cdot \comm{v} && (\forall i=1,...,k) \label{con:ml_load_fpga} \\
%        && \Load{i} & = \sum_v \Incoming{vi} \cdot \comm{v} \nonumber \\
%        &&& + \sum_v x_{vi} \cdot \fpga{v}
%        + \sum_v \Outgoing{vi} \cdot \comm{v} && (\forall i=1,...,k) \label{con:ml_load_fpga} \\
        && \Load{i} & = \sum\nolimits_v x_{vi} \cdot \cpu{v} && (\forall i=k+1,...,k+\ell) \label{con:ml_load_cpu} \\
        && x_{vi} & \in \{0,1\} && (\forall v) \ (\forall i) \nonumber
    \end{alignat}
    \caption{A schema of the Integer Program for max-load minimization (throughput maximization). See \cref{lem:reformulate} on how to reformulate constraint~\eqref{con:ml_contiguous} to obtain an Integer Program.}
    \label{fig:ip_for_maxload}
    %\vspace{-2ex}
\end{figure*}

\iffalse
% super-compressed version
\begin{figure*}
    \begin{alignat}{3}
        &\text{min} \quad \MaxLoad \nonumber \\
        &\text{s.t.} \quad \sum\nolimits_{i=1}^{k+\ell} x_{vi} = 1 && (\forall v) \label{con:ml_assigned} \\
        & \text{the subgraph} \{ v \in V : x_{vi} = 1 \} \text{ is contiguous \textbf{(optional)}} && (\forall i)
        \label{con:ml_contiguous} \\
        & \maxfpgasize \ge \sum\nolimits_v \mem{v} \cdot x_{vi} && (\forall i=1,...,k) \label{con:ml_memory} \\
        & \Incoming{ui} \ge x_{vi} - x_{ui} && (\forall (u,v) \in E) \ (\forall i=1,...,k) \label{con:ml_incoming} \\
        & \Outgoing{ui} \ge x_{ui} - x_{vi} && (\forall (u,v) \in E) \ (\forall i=1,...,k) \label{con:ml_outgoing} \\
        & \MaxLoad \ge \Load{i} && (\forall i) \nonumber \\
        & \Load{i} = \sum\nolimits_v \Incoming{vi} \cdot \comm{v} + \sum\nolimits_v x_{vi} \cdot \fpga{v}
        + \sum\nolimits_v \Outgoing{vi} \cdot \comm{v} && (\forall i=1,...,k) \label{con:ml_load_fpga} \\
%        && \Load{i} & = \sum_v \Incoming{vi} \cdot \comm{v} \nonumber \\
%        &&& + \sum_v x_{vi} \cdot \fpga{v}
%        + \sum_v \Outgoing{vi} \cdot \comm{v} && (\forall i=1,...,k) \label{con:ml_load_fpga} \\
        & \Load{i} = \sum\nolimits_v x_{vi} \cdot \cpu{v} && (\forall i=k+1,...,k+\ell) \label{con:ml_load_cpu} \\
        & x_{vi} \in \{0,1\} && (\forall v) (\forall i) \nonumber
    \end{alignat}
    \caption{A schema of the Integer Program for max-load minimization (throughput maximization). See \cref{lem:reformulate} on how to reformulate constraint~\eqref{con:ml_contiguous} to obtain an Integer Program.}
    \label{fig:ip_for_maxload}
    \vspace{-2ex}
\end{figure*}
\fi

Our IP formulation is presented in \cref{fig:ip_for_maxload}.
We index devices as follows: accelerators are assigned indices $i=1,...,k$
and CPUs are indexed $i=k+1,...,k+\ell$.
As in \cref{sec:latency}, we use binary variables $x_{vi}$ to denote whether node~$v$ should be placed on device~$i$.
The variables $\Incoming{vi}$, $\Outgoing{vi}$ and constraints \eqref{con:ml_assigned}--\eqref{con:ml_outgoing} are also analogous to those used in the latency-minimization IP.
However, this IP is simpler as, thanks to the maximum-load objective, no scheduling aspect is present.
The objective $\MaxLoad$ is the maximum over $\Load{i}$ for all devices $i$,
which is given by constraint~\eqref{con:ml_load_fpga} for accelerators
and \eqref{con:ml_load_cpu} for CPUs.

\subsection{Non-contiguous splits}
\label{sec:non-contiguous-splits}

Suppose we are given a non-contiguous split of a DNN.
We go back to the question from \cref{sec:inference_and_throughput_maximization}:
how to best schedule our workload?
Clearly, we still cannot obtain a smaller average time per sample than
the max-load.\footnote{However, the optimal max-load of a non-contiguous split can be lower than the best contiguous one.}
Fortunately, we can still match the max-load using a variant of pipelining.
A challenge here is that the device-subgraphs may no longer have a linear or acyclic ordering induced from the input DNN (e.g. in \cref{fig:ncontig_split}, neither subgraph comes fully before the other).
One possible solution (see \cref{fig:inf_pipe_noncontiguous} for an example) is to split non-contiguous subgraphs into smaller ones, so that all subgraphs can be topologically ordered, and mentally place them on \emph{virtual devices}; then we build a pipeline of virtual devices.
Now we can build a round-based schedule as before, keeping in mind that virtual devices belonging to the same real device cannot process concurrently.
The bottleneck device will be the one whose total load (of all virtual devices) is maximal.\footnote{This quantity is the original load of that device, independent of the split into virtual devices.}
See \cref{fig:inf_pipe_noncontiguous} for an example.

The above discussion shows that our max-load objective does not change when dealing with non-contiguous splits. 
Our IP solution (\cref{sec:inference_ip})
"natively supports" the non-contiguous setting,
by just removing
%constraint~\eqref{con:ml_contiguous}.
the contiguity constraint.

\subsection{Training and throughput maximization}
\label{sec:training_and_throughput_maximization}

\newcommand{\FW}[1]{\mathrm{FW}_{#1}}
\newcommand{\BW}[1]{\mathrm{BW}_{#1}}

Pipeline parallelism can be applied to training as well, where the task of processing large numbers of samples (and maximizing throughput) is especially relevant.
As discussed at the end of \cref{sec:latency}, computation graphs for training consist of a forward-pass part and a backward-pass part.
Certain backward nodes operate on the same state (weights/parameters) as their corresponding forward nodes, and
%they need to be placed on the same device.
so they must be colocated.
Contiguity, if desired, should be enforced separately for the forward and the backward parts; i.e., a device $i$ would hold a contiguous subgraph of the forward part and a contiguous subgraph of the backward part.
Let $\FW{i}$ and $\BW{i}$ denote their respective loads/costs.

\begin{figure}[t!]
\centering
\subfloat[GPipe schedule (batch with $4$ microbatches).]{
\includegraphics[width=0.5\textwidth]{./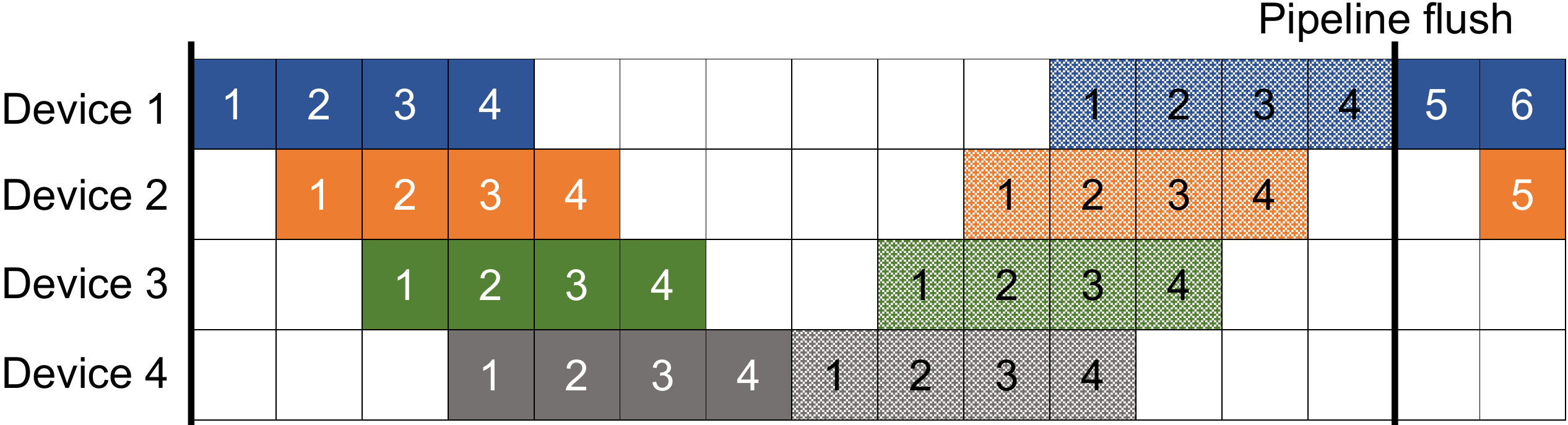}
\label{fig:train_gpipe}}
\subfloat[PipeDream's 1F1B schedule.]{
\includegraphics[width=0.5\textwidth]{./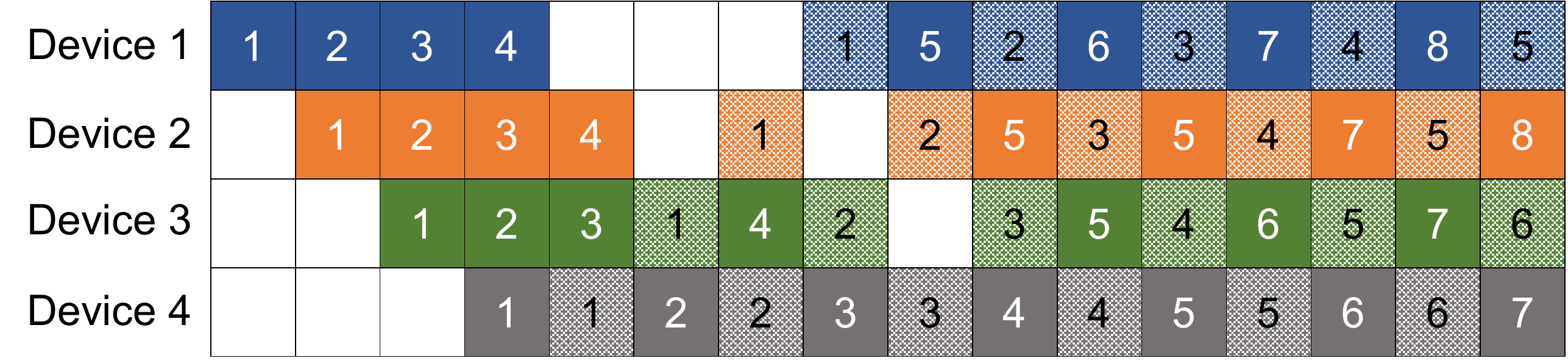}
\label{fig:train_pd}}\\
\caption{Pipeline-parallel training schedules. For simplicity the load is drawn as equal for all devices and for the forward (darker color) and backward passes (lighter color).}
\label{fig:training_schedule}
%\vspace{-4ex}
\end{figure}

%\vspace{-1ex}
\paragraph{Objective.}
GPipe~\cite{huang2019gpipe} and PipeDream~\cite{narayanan2018pipedream}
propose two different pipeline schedules.
Our max-load objective function is appropriate for both these schedules.
In \textbf{PipeDream} -- see \cref{fig:train_pd} -- after a short ramp-up period, every device starts alternating between processing a forward sample and a backward sample, which together takes $\FW{i} + \BW{i}$ time.
As before, the device $i$ that maximizes this quantity, i.e., the load,
is the bottleneck that decides the throughput of the system.
The \textbf{GPipe} schedule, shown in \cref{fig:train_gpipe}, first processes all forward samples in a batch, pipelined as they would be for inference; the average time taken for a sample in this pass is $\max_i \FW{i}$ (ignoring an $O(1/n)$ term).
The backward pass then takes place, with an average time of $\max_i \BW{i}$.
We thus get the objective $\max_i \FW{i} + \max_{i} \BW{i}$; the difference between this and the objective $\max_i \FW{i} + \BW{i}$ is insignificant, as we argue in \cref{sec:max_fw_bw}. %\footnote{In our Integer Programming solution we can alternatively optimize the objective $\max_i \FW{i} + \max_{i'} \BW{i'}$.}
For \emph{non-contiguous splits}, both types of schedules can be modified
in the same vein as in \cref{sec:non-contiguous-splits}.

Next we describe how to extend our algorithms from \cref{sec:inference_and_throughput_maximization}
for training workloads.

\paragraph{Integer Programming:} 
Our IP solution handles training graphs out-of-the-box; the only required modification is that we apply the contiguity constraint~\eqref{con:ml_contiguous} separately for the forward and the backward parts (if desired).

\paragraph{Dynamic Programming:}
Our DP algorithm can only find contiguous splits, but now most devices need to be assigned two contiguous subgraphs (backward and forward).
Our solution is to run the DP only on the forward part, but taking the corresponding backward nodes together with every considered contiguous subgraph (we also count their cost). 
Some care is required to make sure that we assign those backward nodes that do not have a corresponding forward node, and that we only consider those contiguous subgraphs on the forward side whose corresponding backward nodes also form contiguous subgraphs.
See \cref{app:preprocessing} for more details.
%

%\vspace{-0.8em}
\section{Experiments -- Throughput Maximization}
\label{sec:experiments}
%\vspace{-0.4em}

In this section we evaluate our throughput maximization algorithms on the following modern DNN models for inference and training: BERT (with 3, 6, 12, and 24 Transformer layers), ResNet50, Inception-v3, and GNMT.
The results for latency minimization can be found in \cref{sec:latency_experiments}.

We reiterate that we do not evaluate a particular pipelining {system},
but \emph{algorithms} to find high-quality splits for pipelined executions.
However, we remark that our max-load objective function (cost model) is a natural generalization of that of PipeDream, which has been shown~\cite[Figure 15]{narayanan2018pipedream} to closely reflect real performance.

The code and workloads used for evaluations are available at\\
\url{https://github.com/msr-fiddle/dnn-partitioning}.

\paragraph{Inputs (workloads).}
We evaluate our algorithms on diverse and widely used deep learning workloads ranging from transformer models (BERT) and convolutional neural networks (ResNet, Inception) to translation LSTM-based models (GNMT).
We exported BERT~\cite{bert} and ResNet~\cite{resnet} operator graphs through the ONNX Runtime library~\cite{onnxruntime}.
It allows exporting the operator graph topology for deep learning models by taking as input their forward pass and appending the corresponding backward pass to generate an output in ONNX format. 
We obtained all the layer graphs from previous work~\cite{harlap2018pipedream}.

Our inputs correspond to the following deployment scenarios for these workloads.
The DNN workloads are split across 6 accelerators of the same type (GPU for layer graphs, a hardware accelerator representing TPUs or FPGAs for operator graphs).
We use 3 accelerators in case of the smaller BERT-3 and BERT-6 models.
Each accelerator has 16~GB of DRAM and is connected to the CPU over a PCIE 3.0 interconnect. 
To assign a cost to each node and edge in the graph, we profile the workloads on GPU for layer workloads, and we estimate the numbers for the operator graphs for the hardware accelerator. %a non-GPU accelerator.
We then convert the topology of each graph to a JSON format, comprising all the relevant information about the graph that is required of an input instance of our algorithms (see \cref{sec:model}).
For our Dynamic Programming solution,
we run several preprocessing steps
before we can apply our basic method;
see \cref{app:preprocessing} for more details.

\paragraph{Algorithm execution setup.}
All our experiments are executed
(that is, the optimization algorithms that we develop are run)
on a machine with an Intel Xeon E5-2673 v4 CPU and 64 GB of RAM running Ubuntu 18.04. The Dynamic Programming solution is implemented in C++ and compiled with gcc 7.4 using the \texttt{-O3} optimization flag; it is a sequential (single-threaded) implementation.
The Integer Programming formulations are solved using Gurobi 8.1~\cite{gurobi}, which runs on 4 CPU cores.
The IP models are constructed using Gurobi bindings for Python; the runtime of this construction is insignificant.

\paragraph{Baselines used for comparison.}
We use the following baselines to compare our solutions:
\begin{itemize}%[noitemsep,topsep=0pt,leftmargin=*]
\item \textbf{Hand-crafted placements}, similar to~\cite{hierarchical-2018,spotlight,placeto}.  This is still a widely used means for device placement. 
We perform expert splits only for layer graphs, as the operator graphs with their much stronger branching are infeasible to split manually.
In line with prior work~\cite{sutskever2014sequence,wu2016google}, for GNMT we place each LSTM layer on a separate GPU, and then balance between 6 devices.
We proceed similarly with BERT-24.
In ResNet50 and Inception-v3, we split the convolution, batch normalization, and ReLu layers equally among all devices.

\item \textbf{Scotch}~\cite{scotch}, a graph partitioning software used for mapping computation graphs onto devices in a balanced way, taking communication costs between dependent nodes into account.
The output splits are not guaranteed to be contiguous.

\item \textbf{Local search}~\cite{localsearch} is a heuristic that starts from a random split and repeatedly makes the best single-node reassignment until a local optimum is reached. We restart 10 times and take the best solution. Note that this almost always yields a non-contiguous split.

\item \textbf{PipeDream}~\cite{narayanan2018pipedream}'s optimizer only supports layer graphs, thus we only run it on our layer workloads. It requires the input to be a linear path, thus it contracts all branchings to single nodes.
%It that are linear paths; as it does not support branching operator graphs, for such workloads we contract each layer into a single node.
%
\end{itemize}

%\NewDocumentCommand{\rot}{O{45} O{1em} m}
\newcommand{\rot}[1]{\multicolumn{1}{c}{\makebox[2em][l]{\rotatebox{20}{#1}}}}%
%\newcommand{\rottwo}[1]{\multicolumn{1}{c}{\multirow{2}{1em}{ \makebox[1em][l]{\rotatebox{20}{#1}}}}}%

%\begin{table}[]
%\resizebox{\textwidth}{!}{%
%  \begin{tabular}{p{1.5cm}|c|c|c|c|c|c|c|c|c|c|c|c|c}
\begin{table*}[t!]
\small
  \centering
  \begin{adjustbox}{width=\textwidth,center}
  \renewcommand{\arraystretch}{1.1} % increase vertical space
  \addtolength{\tabcolsep}{-3pt} % reduce horizontal space
  %\hspace*{-8em}%
  \begin{tabular}{p{2.0cm}|c|c|c|c|c|c|c|c|c|c|c|c|c|c}
    %\hline
    \multirow{2}{1cm}{\textbf{Workload}} & \multirow{2}{1cm}{\textbf{Nodes}} & \multicolumn{3}{c|}{\textbf{DP (contiguous)}} & \multicolumn{2}{c|}{\textbf{IP (contiguous)}} & \multicolumn{3}{c|}{\textbf{IP (non-contiguous)}} & \textbf{DPL} & \rot{\textbf{Expert}} & \rot{\textbf{Local search}} & \rot{\textbf{PipeDream}} & \rot{\textbf{Scotch}}\\
    \cline{3-15} & &  {\textbf{Ideals}} & {\textbf{Runtime}} & {\textbf{TPS}} & {\textbf{Runtime}} & {\textbf{TPS}} & {\textbf{Runtime}} & {\textbf{TPS}} & {\textbf{Gain}}& {\textbf{TPS}}& {\textbf{TPS}}& {\textbf{TPS}}& {\textbf{TPS}}& {\textbf{TPS}} \\
    \hline
    \multicolumn{7}{l}{Operator-granularity graphs, pipelined inference}\\
    \hline
    \textbf{BERT-3} & 235 & 1428 & \textbf{1s} & 27.92 &  \textbf{1s} & 27.92 & 2s & 21.91 & 27\% & 27.92 & - & 24.32 & - & 35.94 \\
    \textbf{BERT-6} & 418 & 1923 & 5s & 29.58 & \textbf{4s} & 29.58 & 54s (3s*) & 28.33 & 4\% & 29.58 & -  & 42.52 & - & 49.80 \\
    \textbf{BERT-12} & 783 & 2906 & \textbf{19s} & 147.48 & 11m (1m*) & 147.48 & $>$20m (18s*) & 130.03 & 13\% & 147.48 & - & 257.38 & - & 230.12 \\
    \textbf{ResNet50} & 604 & 241 & \textbf{0s} & 124.35 & 15s & 124.35 & 1m (10s*) & 124.35 & 0\% & 124.35 & - & 250.08 & - & 197.84 \\
    \hline
    \multicolumn{7}{l}{Operator-granularity graphs, pipelined training}\\
    \hline
    \textbf{BERT-3} & 600 & 2774 & 8s & 65.30 & 2s & 65.30 & \textbf{1s} & 54.21 & 20\% & 65.30 & - & 66.17 & - & 416.97 \\
    \textbf{BERT-6} & 1071 & 3776 & 25s & 72.86 & \textbf{6s} & 72.86 & 13m (2s*) & 71.64 & 1\% & 79.50 & - & 94.86 & - & 130.20 \\
    \textbf{BERT-12} & 2012 & 2938 & \textbf{1m} & 438.00 & $>$20m (1m*) & 438.00 & $>$20m (1m*) & 373.42 & 17\% & 438.00 & - & 737.99 & - & 800.79 \\
    \textbf{ResNet50} & 1243 & 258 & \textbf{2s} & 255.19 & 2m (28s*) & 255.19 & 7s & 255.19 & 0\% & 255.19 & - & 530.95 & - & 379.21 \\
    \hline
    \multicolumn{7}{l}{Layer-granularity graphs, pipelined inference}\\
    \hline
    \textbf{BERT-24} & 32 & 30 & \textbf{0s} & 17.79 & 1s & 17.79 & $>$20m (1s*) & 17.71 & 0.4\% & 17.79 & 20.08 & 17.80 & 17.79 & 18.03 \\
    \textbf{ResNet50} & 177 & 242 & \textbf{0s} & 33.77 & 48s & 33.77 & 14s & 33.31 & 1.3\% & 33.77 & 43.92 & 35.63 & 39.38 & 34.50 \\
    \textbf{InceptionV3} & 326 & 36596 & 32m & 51.55 & 3m & 51.55 & \textbf{19s} & 51.52 & 0\% & 51.55 & 102.48 & 54.03 & 60.42 & 54.01 \\
    \textbf{GNMT} & 96 & 17914 & 29s & 32.91 & \textbf{4s} & 32.91 & 9s & 31.68 & 4\% & 32.91 & 46.21 & 31.75 & 33.03 & 34.92 \\
    \hline
    \multicolumn{7}{l}{Layer-granularity graphs, pipelined training}\\
    \hline
    \textbf{BERT-24} & 64 & 30 & \textbf{0s} & 41.75 & 1s & 41.75 & 9s & 39.79 & 5\% & 41.75 & 49.40 & 39.93 & 41.75 & 42.01 \\
    \textbf{ResNet50} & 354 & 242 & \textbf{1s} & 78.63 & 45s & 78.63 & 15s & 76.65 & 3\% & 78.65 & 112.11 & 81.32 & 83.67 & 80.10 \\
    \textbf{InceptionV3} & 652 & 36596 & 58m & 122.76 & 8m & 123.35 & \textbf{43s} & 117.72 & 5\% & 123.93 & 213.65 & 122.80 & 128.32 & 128.32 \\
    \textbf{GNMT} & 192 & 17914 & 42s & 107.00 & 4s & 107.00 & \textbf{1s} & 88.47 & 21\% & 107.00 & 137.15 & 91.52 & 107.35 & 107.00 \\
  \end{tabular}
  %}
  \end{adjustbox}
    \caption{Pipelined workloads for maximization of throughput / minimization of Time-Per-Sample (TPS, equal to max-load). We run the IP optimizer until it guarantees a solution within 1\% of the optimum, but no longer than 20 minutes. The parenthesized times with asterisks denote the time it took the optimizer to find the solution of the final value (though it could not yet guarantee its near-optimality).
    DPL stands for the DP with the Linearization heuristic (see Section~\ref{sec:linearization}), which always runs under 3 seconds.
    The fastest non-DPL runtime for every input is in bold.
  }
  \label{tab:tput-workloads}
  %\vspace{-3ex}
\end{table*}

  \iffalse
  % one-accelerator results:
  % 49.35
  % 77.46
  % 642.78
  % 325.13
  
  % 122.99
  % 192.92
  % 1891.74
  % 684.89
  
  % 92.41
  % 201.45
  % 310.97
  % 182.56
  
  % 217.81
  % 462.38
  % 710.74
  % 520.45
  
  % old table with data-parallel results:
  \hspace*{-11em}\begin{tabular}{p{1.6cm}cccc}
    %\hline
    \textbf{Workload} & \rot{\textbf{1 acc.}} & \rot{\textbf{Data-parallel}} & \rot{\textbf{Human expert}} & \rot{\textbf{Local search (non-contiguous)}} \\
    \hline
    \multicolumn{5}{l}{\small Operator-granularity graphs, pipelined inference}\\
    \hline
    BERT-3 & 49.35 & 486.1 & ? & 24.32 \\
    BERT-6 & 77.46 & 718.4 & ? & 42.52 \\
    BERT-12 & 642.78 & 3710.5 & ? & 257.38 \\
    ResNet50 & 325.13 & 1417.6 & ? & 250.08 \\
    \hline
    \multicolumn{5}{l}{\small Operator-granularity graphs, pipelined training}\\
    \hline
    BERT-3 & 122.99 & 1064.9 & ? & 66.17 \\
    BERT-6 & 192.92 & 1589.1 & ? & 94.86 \\
    BERT-12 & 1891.74 & 9345.6 & ? & 737.99 \\
    ResNet50 & 684.89 & 2867.1 & ? & 530.95 \\
    \hline
    \multicolumn{5}{l}{\small Layer-granularity graphs, pipelined inference}\\
    \hline
    BERT-24 & 92.41 & 723.5 & ? & 17.80 \\
    ResNet50 & 201.45 & 7566.0 & ? & 35.63 \\
    InceptionV3 & 310.97 & 6569.3 & ? & 54.03 \\
    GNMT & 182.56 & 987.6 & 58.65 & 31.75 \\
    \hline
    \multicolumn{5}{l}{\small Layer-granularity graphs, pipelined training}\\
    \hline
    BERT-24 & 217.81 & 1452.5 & ? & 39.93 \\
    ResNet50 & 462.38 & 15141.9 & ? & 81.32 \\
    InceptionV3 & 710.74 & 13153.4 & ? & 122.80 \\
    GNMT & 520.45 & 2001.0 & 162.03 & 91.52 \\
  \end{tabular}
  \fi

\subsection{Results}
\cref{tab:tput-workloads} shows each workload, the number of nodes (operators or layers) in the graph, runtimes of our algorithms, and the average Time-Per-Sample (TPS) -- that is, the maximum device load, which is inversely proportional to throughput -- of the found splits. 
We also report the gain of best non-contiguous splits over best contiguous splits, and the TPS of the baselines.
For better understanding of the DP runtimes we show the number of ideals in the forward part of each DNN.
We also present the same results
    in an equivalent form,
    displaying the throughput advantages
    obtained by our algorithms
    with respect to baselines.
    See \cref{tab:tput-improvement}
    on page~\pageref{tab:tput-improvement}
    and
    \cref{fig:tput-improvement}
    on page~\pageref{fig:tput-improvement}.
Also, see \cref{fig:contig_noncont_ip_bert3_inf} for an illustration of an example pair of
optimal contiguous (top) and non-contiguous (bottom) splits of an operator-level graph (BERT-3) that are returned by our algorithms.

\begin{table}[t!]
\small
  \centering
  \begin{adjustbox}{width=17cm,center}
  \renewcommand{\arraystretch}{1.1} % increase vertical space
  \addtolength{\tabcolsep}{-3pt} % reduce horizontal space
\begin{tabular}{p{1.9cm}cccccccc}
    \textbf{Workload} & \textbf{DP} & \textbf{IP (contiguous)} & \textbf{IP (non-contiguous)} & \textbf{DPL} & \textbf{Expert} & \textbf{Local search}  & \textbf{PipeDream} & \textbf{Scotch} \\
    \hline
    \multicolumn{9}{l}{Operator-granularity graphs, pipelined inference}\\
    \hline
\textbf{BERT-3}      & 1.00$\times$ & 1.00$\times$ & 1.27$\times$ & 1.00$\times$ &  -  & 1.15$\times$ &  -     & 0.78$\times$\\
\textbf{BERT-6}      & 1.00$\times$ & 1.00$\times$ & 1.04$\times$ & 1.00$\times$ &  -  & 0.70$\times$ &   -    & 0.59$\times$\\
\textbf{BERT-12}     & 1.00$\times$ & 1.00$\times$ & 1.13$\times$ & 1.00$\times$ &  -  & 0.57$\times$ &   -    & 0.64$\times$\\
\textbf{ResNet50}    & 1.00$\times$ & 1.00$\times$ & 1.00$\times$ & 1.00$\times$ &  -  & 0.50$\times$ &   -    & 0.63$\times$\\
    \hline
    \multicolumn{9}{l}{Operator-granularity graphs, pipelined training}\\
    \hline
\textbf{BERT-3}      & 1.00$\times$ & 1.00$\times$ & 1.20$\times$ & 1.00$\times$ &   -    & 0.99$\times$ &   -    & 0.16$\times$ \\
\textbf{BERT-6}      & 1.00$\times$ & 1.00$\times$ & 1.02$\times$ & 0.92$\times$ &   -    & 0.77$\times$ &    -   & 0.56$\times$ \\
\textbf{BERT-12}     & 1.00$\times$ & 1.00$\times$ & 1.17$\times$ & 1.00$\times$ &   -    & 0.59$\times$ &   -    & 0.55$\times$ \\
\textbf{ResNet50}    & 1.00$\times$ & 1.00$\times$ & 1.00$\times$ & 1.00$\times$ &   -    & 0.48$\times$ &   -    & 0.67$\times$ \\
    \hline
    \multicolumn{9}{l}{Layer-granularity graphs, pipelined inference}\\
    \hline
\textbf{BERT-24}     & 1.00$\times$ & 1.00$\times$ & 1.00$\times$ & 1.00$\times$ & 0.89$\times$ & 1.00$\times$ & 1.00$\times$ & 0.99$\times$ \\
\textbf{ResNet50}    & 1.00$\times$ & 1.00$\times$ & 1.01$\times$ & 1.00$\times$ & 0.77$\times$ & 0.95$\times$ & 0.86$\times$ & 0.98$\times$ \\
\textbf{InceptionV3} & 1.00$\times$ & 1.00$\times$ & 1.00$\times$ & 1.00$\times$ & 0.50$\times$ & 0.95$\times$ & 0.85$\times$ & 0.95$\times$ \\
\textbf{GNMT}        & 1.00$\times$ & 1.00$\times$ & 1.04$\times$ & 1.00$\times$ & 0.71$\times$ & 1.04$\times$ & 1.00$\times$ & 0.94$\times$ \\
    \hline
    \multicolumn{9}{l}{Layer-granularity graphs, pipelined training}\\
    \hline
\textbf{BERT-24}     & 1.00$\times$ & 1.00$\times$ & 1.05$\times$ & 1.00$\times$ & 0.85$\times$ & 1.05$\times$ & 1.00$\times$ & 0.99$\times$ \\
\textbf{ResNet50}    & 1.00$\times$ & 1.00$\times$ & 1.03$\times$ & 1.00$\times$ & 0.70$\times$ & 0.97$\times$ & 0.94$\times$ & 0.98$\times$ \\
\textbf{InceptionV3} & 1.00$\times$ & 1.00$\times$ & 1.04$\times$ & 0.99$\times$ & 0.57$\times$ & 1.00$\times$ & 0.96$\times$ & 0.96$\times$ \\
\textbf{GNMT}        & 1.00$\times$ & 1.00$\times$ & 1.21$\times$ & 1.00$\times$ & 0.78$\times$ & 1.17$\times$ & 1.00$\times$ & 1.00$\times$ \\
  \end{tabular}
  \end{adjustbox}
  \vspace{1em}
  \caption{
    Throughput maximization results, same as in Table~\ref{tab:tput-workloads} in Section~\ref{sec:experiments},
    but presented in terms of throughput improvement
    in relation to the DP (Dynamic Program, contiguous splits) being $1\times$.
    For example, on BERT-3 inference operator-graph,
    the best non-contiguous split offers $1.27\times$ the throughput of the best contiguous one, and Scotch gives $0.78 \times$  the throughput of the best contiguous one.
    In addition, we show the single-accelerator throughput (placing the entire DNN workload on one accelerator).
    See Figure~\ref{fig:tput-improvement} for a graphical representation of data in this table.
    }
    \label{tab:tput-improvement}
\end{table}

\begin{figure}
\centering
\begin{tabular}{cc}
\subfloat[Operator graphs, inference]{\includegraphics[width=0.99\textwidth]{./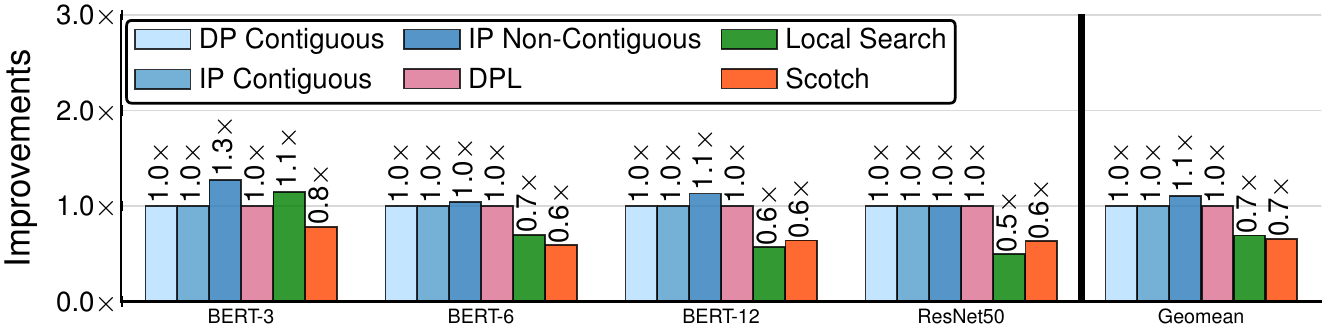}} \\
\subfloat[Operator graphs, training]{\includegraphics[width=0.99\textwidth]{./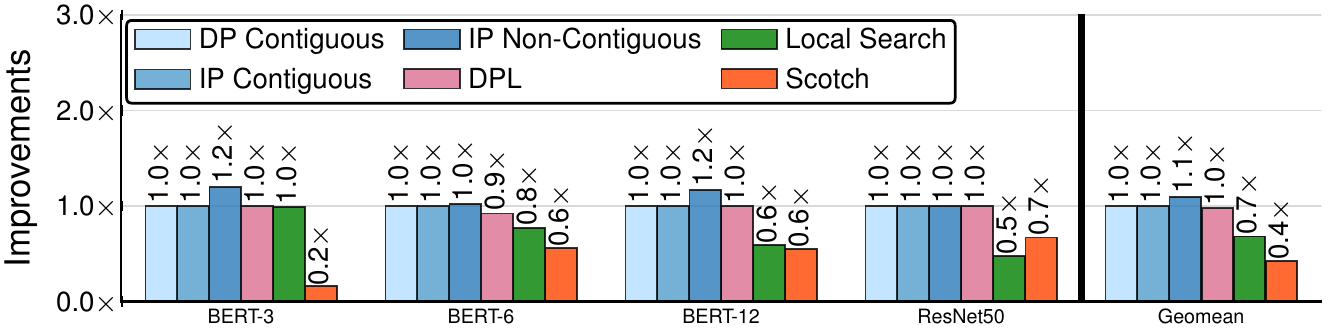}} \\
\subfloat[Layer graphs, inference]{\includegraphics[width=0.99\textwidth]{./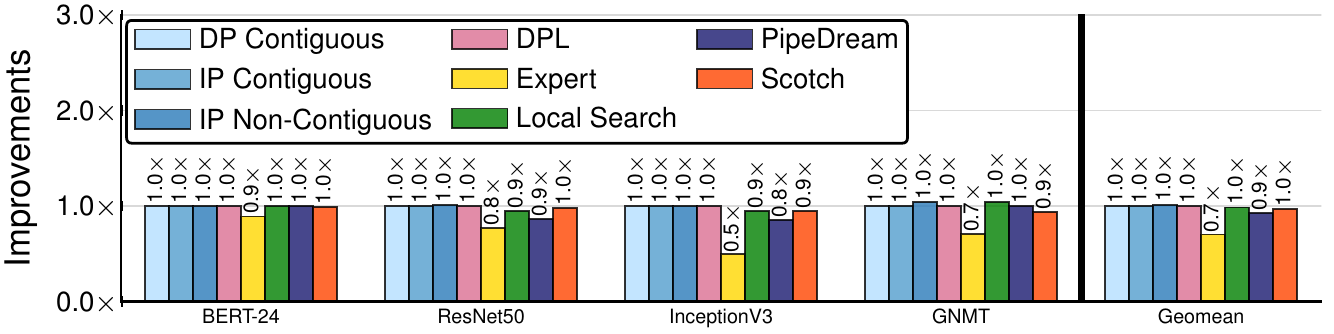}} \\
\subfloat[Layer graphs, training]{\includegraphics[width=0.99\textwidth]{./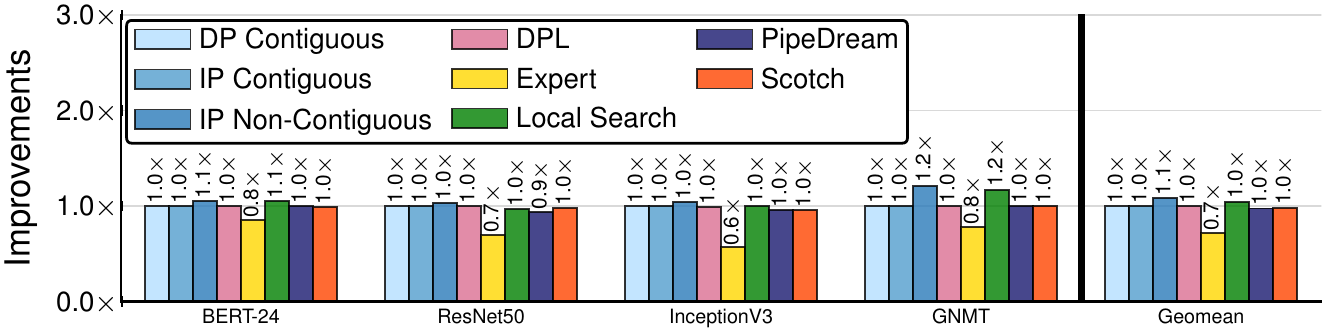}}\\
\if 0
\hline \hline
\subfloat[Operator graphs, inference]{\includegraphics[width=0.99\textwidth]{./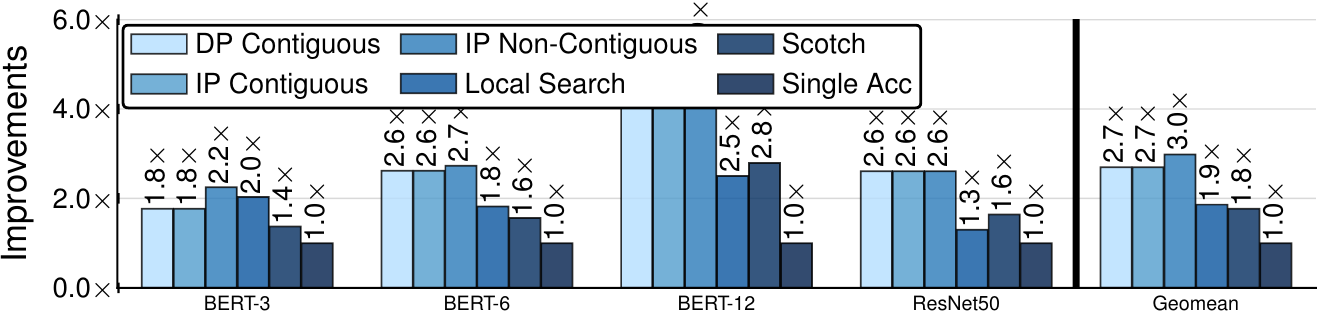}} \\
\subfloat[Operator graphs, training]{\includegraphics[width=0.99\textwidth]{./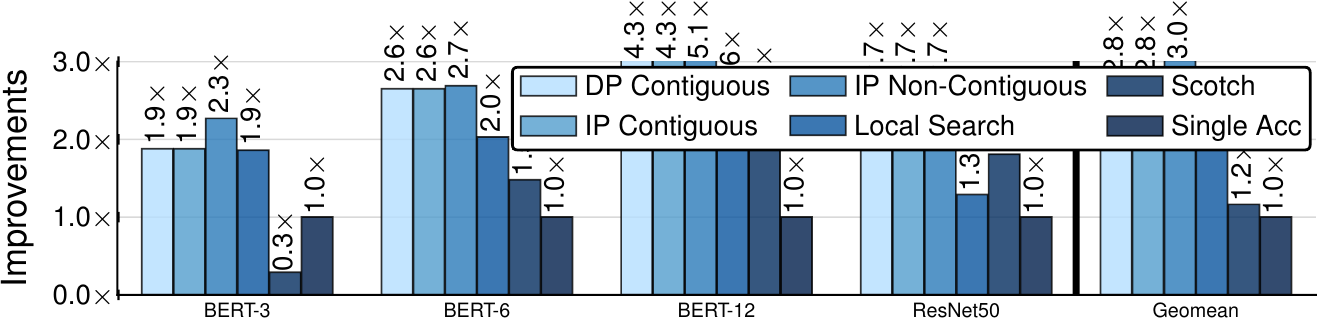}} \\
\hline
\subfloat[Layer graphs, inference]{\includegraphics[width=0.99\textwidth]{./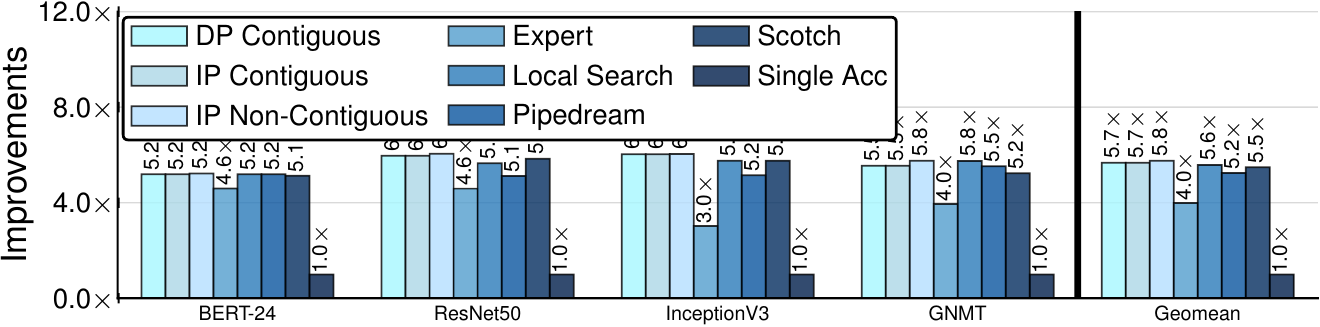}} \\
\subfloat[Layer graphs, training]{\includegraphics[width=0.99\textwidth]{./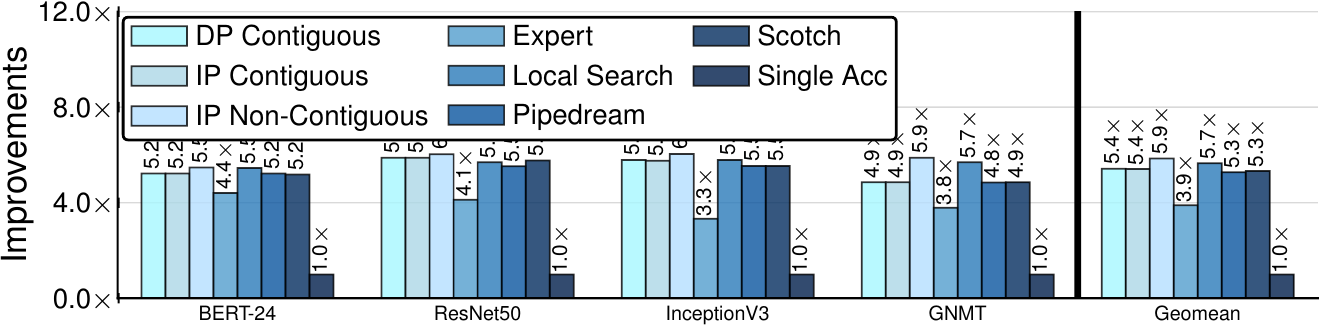}}\\
\fi 
\end{tabular}
\caption{An illustration of throughput maximization results from \cref{tab:tput-improvement}, with DP (contiguous) serving as $1\times$. The blue bars are algorithms from this work, whereas the non-blue-colored bars show baselines. Plots (a) and (b) represent throughput improvements for operator-level graphs, and (c) and (d) for layer-level graphs.}
\label{fig:tput-improvement}
\end{figure}

\begin{figure*}[t!]
\centering
\includegraphics[width=\textwidth]{./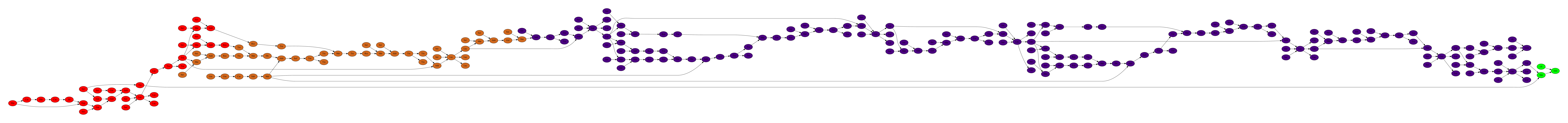}
\\
\includegraphics[width=\textwidth]{./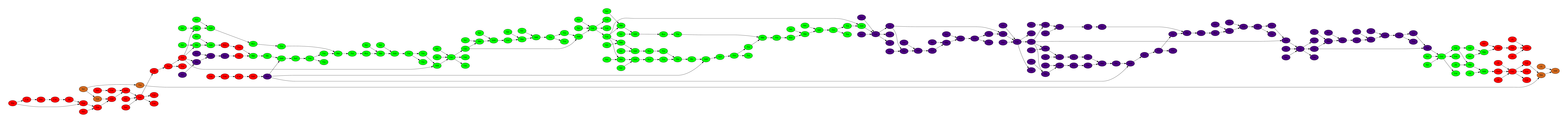}
\caption{Optimal contiguous (top) and non-contiguous (bottom) splits of the BERT-3 operator-level inference graph onto 3 accelerators and 1 CPU (for throughput maximization). Each node is colored based on its placement -- red color indicates CPU placement, and each remaining color indicates a different accelerator. The non-contiguous split achieves a 27\% higher throughput. If viewed on a computer, the figures can be zoomed in to an arbitrary degree for better inspection.}
\label{fig:contig_noncont_ip_bert3_inf}
\end{figure*}

Below we discuss the main takeaways.

\paragraph{DP vs. IP (optimality, efficiency).}
DP and IP (contiguous) both return the optimal split, so their TPS/max-load values are equal (up to a 1\% IP optimality threshold). %; any discrepancies arise since we stop the IP solver as soon as a 1\% optimality gap is attained.
The optimization problems we solve are computationally hard, and our algorithms are exponential-time in general; however, we see that their runtimes are reasonable on real-life DNN inputs due to their workload structure.
For our IP solution we used a commercial-grade solver~\cite{gurobi} that ran on 4 CPU cores.
The efficiency of free software solvers is much worse.
It is worth noting that most of the runtime is often spent on certifying the near-optimality of the found solution; it would therefore be reasonable to cut the computation much sooner, still obtaining high-quality solutions.
For the DP solution we created a single-core, self-contained implementation.
Its runtimes are very competitive with those of the IP, except for the most branching models such as Inception-v3.
We remark that in practice the DP runtime is dominated by the $O(\numideals^2 |E|)$ term and does not depend much on the numbers $k$, $\ell$ of devices unless these are very large;
in contrast,
increasing the number of accelerators can have a large impact on the IP runtime (e.g.~having 6 rather than 3 accelerators for the BERT-3 inference workload causes the IP runtime to jump from 1s to 27s).
%the IP runtime is less stable and even having 6 rather than 3 accelerators for the BERT-3 inference workload causes it to jump from 1s to 27s.
%
Moreover, the DP runtime does not depend on the node weights, whereas the IP runtime does.

\paragraph{DPL (DP with the Linearization heuristic)} (see Section~\ref{sec:linearization}).
The DPL solution runs in time essentially $O(|V|^2|E|)$, which for our workloads is at most seconds. Crucially, the restricted search space results in a throughput loss of $9\%$ for the BERT-6 training workload, $1\%$ loss for InceptionV3 training, and no loss for all other workloads.
Therefore, DPL would be our method of choice for very large graphs; it would be able to process graphs with tens of thousands of nodes within, say, an hour.

\paragraph{Contiguous vs.~non-contiguous splits.}
Our IP solution is able to find optimal non-contiguous splits.
To the best of our knowledge, our work is the first one to examine \emph{non-contiguous splits} for pipelined model parallelism;
thus we use our experiments to evaluate the potential gains in throughput.
We observe that on average, the best non-contiguous splits offer an $\scriptstyle\sim$$10\%$ gain over the best contiguous splits; for BERT-3, the gain is as large as 20-27\%.

\paragraph{Comparison to other baselines.}
\textit{As seen in \cref{tab:tput-workloads}, our non-contiguous splits outperform all the techniques, with an improvement of up to \ncmaxexpert over \textbf{hand-crafted expert} splits (average \ncavgexpert), \ncmaxlocal over \textbf{local search} (average \ncavglocal), \ncmaxpd over \textbf{PipeDream} (average \ncavgpd), \ncmaxscotch over \textbf{Scotch} (average \ncavgscotch).}   
Hand-crafted expert placements for the layer-based graphs provide \expertc and \expertnc of the throughput in comparison to contiguous and non-contiguous splits,  respectively.
At the layer granularity, some workloads have a repetitive graph structure, which can be split manually across devices, yet this turns out to be not enough to obtain optimality.
%underscoring the difficulty of this optimization problem.
%
%\todo{
Furthermore, performing a reasonable human split over operator graphs is infeasible due to the large branching and number of nodes.
%, therefore we contract operators into layers to split.}
%
%This is partially understandable, as they may not have been designed with pipelined execution in mind; nevertheless, our results underscore the difficulty of this optimization problem.
%
Local search fares badly, underscoring the difficult, non-local structure of the optimization problem, which is also resistant to the heuristics used by Scotch.
%
%Finally, given that PipeDream only considers layer graphs, we only compare against the layer-based workloads  our operator-granularity non-contiguous splits can outperform layer-based splits by up to \ncmaxpd.
%
Finally, PipeDream only considers \emph{linear} layer graphs and contracts branchings in the input graph; whereas our technique that does not contract branches is able to explore a larger search space for operator placement and achieve up to \ncmaxpd higher throughput.
%
%\todo{Add why resnet and inception v3 training is data parallel, cite pipedream and spotlight}

\subsection{Advantage of operator vs. layer graphs}
\label{sec:operator_vs_layer}

\begin{table*}[t!]
\small
  \centering
  %\begin{adjustbox}{width=14.3cm,center}
  \renewcommand{\arraystretch}{1.1} % increase vertical space
  \addtolength{\tabcolsep}{-3pt} % reduce horizontal space
  %\hspace*{-8em}%
  \begin{tabular}{p{1.6cm}|P{4.2cm}|P{4.2cm}|c}
    %\hline
    \textbf{Workload} & \textbf{DP (run on original \newline operator graph)} & \textbf{DP (run on contracted \newline layer graph)} & \textbf{Gain} \\
    \hline
    \multicolumn{4}{l}{Operator-granularity graphs, pipelined inference}\\
    \hline
    \textbf{BERT-3} & 27.92 & 27.92 & 0\% \\ 
    \textbf{BERT-6} & 29.58 & 29.58 & 0\% \\ 
    \textbf{BERT-12} & 147.48 & 159.43 & 8\% \\ 
    \textbf{ResNet50} & 124.35 & 129.15 & 4\% \\ 
    \hline
    \multicolumn{4}{l}{Operator-granularity graphs, pipelined training}\\
    \hline
    \textbf{BERT-3} & 65.30 & 65.30 & 0\% \\ 
    \textbf{BERT-6} & 72.86 & 72.86 & 0\% \\ 
    \textbf{BERT-12} & 438.00 & 465.41 & 6\% \\ 
    \textbf{ResNet50} & 255.19 & 269.63 & 6\% \\ 
  \end{tabular}
  %}
  %\end{adjustbox}
    \caption{Throughput maximization; throughput advantage of optimization on the operator-granularity level vs.~the layer-granularity level (see Section~\ref{sec:operator_vs_layer}), for optimal contiguous splits. The numbers shown are TPS (time-per-sample).}
  \label{tab:operator_vs_layer}
\end{table*}

In this section
we measure the throughput advantage
    that can be obtained by using finer-granularity
    operator graphs
    in lieu of layer graphs.
    No conclusions on this matter can be drawn from the experimental results of \cref{sec:experiments} or \cref{sec:latency_experiments} alone,
    as our operator-graph and layer-graph workloads are disjoint.\footnote{Even though the ResNet50 DNN architecture appears in both lists, these input graphs come from different sources; the layer-graph ResNet50
    has runtimes profiled on a GPU,
    while the
    operator-graph ResNet50 has runtimes estimated for a non-GPU hardware accelerator. Thus the corresponding results are incomparable.}
    Therefore we proceed as follows:
    for each operator-graph workload,
    we manually annotate all nodes to group them into corresponding layers.
    Then we contract each layer and run the DP algorithm on the layer-graph thus obtained.
    
    We present the results of this experiment in \cref{tab:operator_vs_layer}.
    %on page~\pageref{tab:operator_vs_layer}.
    We compare the optimal contiguous splits.
    The results show that finding the best split
    on the more precise operator level
    results in a throughput advantage of up to 8\%.

\section{Experiments -- Latency Minimization}
\label{sec:latency_experiments}

In this section we evaluate our Integer Programming (IP) based algorithm
for latency minimization.
We consider the most relevant deployment scenario:
single-sample inference with memory-bound accelerators
(that is, when the entire model does not fit on one accelerator).
We run our algorithm for the same inference workloads as in \cref{sec:experiments}.
As before, we use Gurobi to solve our IP formulation.
The code and workloads used for evaluations are available at
\url{https://github.com/msr-fiddle/dnn-partitioning}.

\paragraph{Devices, implementation, experimental setup.}
We run experiments on the same inference workloads as in \cref{sec:experiments}.
However, to model a memory-bound deployment scenario where splits are necessary to fit the DNN, we assume an accelerator DRAM size of either 600 MB (for smaller DNNs, of size at most 3.6~GB) or 2~GB (for larger DNNs, of size at least 9~GB), and a number of accelerators such that the total accelerator memory is 1.4--1.8 times the size of the DNN.
Note that this implies, in particular, that a single-accelerator split is not feasible for any of our workloads.
In keeping with the mild assumption made at the beginning of \cref{sec:latency}, we assume 8 CPU cores.
We use our IP solution
to optimize for the best contiguous split.
Other implementation details
are similar
as in
\cref{sec:experiments}.

\paragraph{Baselines.}
We compare our IP algorithm against four baseline solutions.
The \textbf{first} is the following \textbf{greedy} algorithm:
\begin{itemize}
    \item contract colocated nodes and any strongly connected components that arise (as in \cref{app:preprocessing}),
    \item fix a topological ordering of the nodes,
    \item for every available accelerator, place as many nodes (in the topological order above) as will fit on the accelerator,
    \item place any remaining nodes on the CPU.
\end{itemize}
The greedy algorithm returns a contiguous split that is feasible (i.e.~satisfies the memory size constraints). For all our test workloads, it is able to place all nodes on accelerators (thus it does not use the CPU). However, it does not take processing times or communication costs into account when selecting the split.
The runtimes of this baseline are under 0.5s.

Our \textbf{second} baseline is meant to answer the following question:
\textit{If we obtain splits by optimizing the max-load objective, as we would for the throughput maximization task (that pertains to the pipelined setting), are they "good" in terms of minimizing latency as well?}
Therefore, we obtain contiguous splits by running the \textbf{max-load DP} algorithm of \cref{sec:dp}, and then we report the single-sample latency that they obtain.
The runtimes of this baseline are essentially the same as those of the max-load DP reported in \cref{sec:experiments} for the corresponding DNNs.

The \textbf{third} baseline
is \textbf{Scotch}~\cite{scotch},
a graph partitioning software used for mapping
computation graphs onto devices in a balanced way,
taking communication costs between dependent nodes
into account
(used also in \cref{sec:experiments}).
It produces non-contiguous splits.

The \textbf{fourth} baseline
are \textbf{human-expert} placements,
the same as used in \cref{sec:experiments}.

We do not compare against a local search heuristic,
as it is not clear how to design one
that satisfies the memory bounds.

%We remark that a single-accelerator placement is not feasible 

\begin{table*}[t!]
\small
  \centering
  \begin{adjustbox}{width=\textwidth,center}
  \renewcommand{\arraystretch}{1.1} % increase vertical space
  \addtolength{\tabcolsep}{-3pt} % reduce horizontal space
  \begin{tabular}{p{1.8cm}|c||c|c|c|c||c|c|c|c}
    %\hline
    \multirow{2}{1cm}{\textbf{Workload}} & \multirow{2}{1cm}{\textbf{Nodes}} & \textbf{Greedy} 
    & \textbf{Max-load DP}
    & \textbf{Scotch}
    & \textbf{Expert}
    & \multicolumn{4}{c}{\textbf{IP}}
    \\
    \cline{3-10} & & \textbf{Latency} & \textbf{Latency} & \textbf{Latency} & \textbf{Latency} & \textbf{Latency} & \textbf{Runtime} & \textbf{MIP Gap} & \textbf{Gain}
    \\
    \hline
    \multicolumn{8}{l}{\small Operator-granularity graphs, single-query inference}\\
    \hline
    BERT-3 & 235 & 416.20  & 415.90 & 497.75 & - & \textbf{408.47} & 3m (6s*) & $<$1\% & 1.8\% \\
    BERT-6 & 418 & 494.13 & 445.48 & 564.61 & - & \textbf{438.06} & $>$1h (1m*) & 12.6\% & 1.7\% \\
    BERT-12 & 783 & 867.84 & 1327.03 & 1755.41 & - & \textbf{729.56} & $>$1h (10m*) & 93.8\% & 19.0\% \\
    ResNet50 & 604 & 839.54 & 1123.65 & 857.73 & - & \textbf{672.06} & $>$1h (7m*) & 54.0\% & 24.9\% \\
    \hline
    \multicolumn{8}{l}{\small Layer-granularity graphs, single-query inference}\\
    \hline
    BERT-24 & 32 & \textbf{100.22} & 108.03 & 108.03 & 111.94 & \textbf{100.22} & 14s (1s*) & $<$1\% & 0.0\% \\
    ResNet50 & 354 & 4197.06 & 1443.79 & 3610.87$^\dagger$ & OOM & \textbf{1191.02} & $>$1h (19m*) & 93.1\% & 21.2\% \\
    InceptionV3 & 652 & 2485.24 & 1621.74 & 3068.00$^\dagger$ & OOM & \textbf{1318.08} & $>$1h (43m*) & 93.5\% & 23.0\% \\
    GNMT & 192 & 268.50 & 244.33 & 636.91$^\dagger$ & 293.40$^\dagger$ & \textbf{225.6} & 3m (1m*) & $<$1\% & 8.3\% \\
  \end{tabular}
  \end{adjustbox}
  \caption{Single-sample inference workloads for latency minimization. We run the IP optimizer until it guarantees a solution within 1\% of the optimum, but no longer than 60 minutes. Where the optimization was terminated after 60 minutes, we report the optimality gap that the solver was able to certify at that time. The parenthesized times with asterisks denote the time it took the optimizer to find a solution within 2\% of the final value (though it could not yet guarantee its near-optimality). We also report the latencies obtained by the four baselines described in Section~\ref{sec:latency_experiments}; their running times are always under 0.5s (Greedy, Scotch) or the same as reported in Section~\ref{sec:experiments} (Max-load DP).
  Daggers$^\dagger$ denote a slight (between 20\% and 34\%) violation of the memory constraints,
  and "OOM" denotes a major violation (more than a factor $3 \times$).
  The best latency for each workload is given in bold. In the column "Gain" we report the advantage of our IP algorithm's solution over the best baseline.}
  \label{tab:inference-workloads}
\end{table*}

\subsection{Results}
\cref{tab:inference-workloads} shows each workload, the number of nodes (operators or layers) in the graph, and the latencies found by our IP algorithm and by the baselines. We also report running times.

As we remarked in \cref{sec:throughput}, the latency minimization task is significantly harder than throughput maximization as it contains a scheduling component.
This is reflected in the performance of our IP algorithm: for five out of eight workloads used, the IP solver did not converge to certified (near-)optimality within 1 hour.
However, it still comes out far ahead:
\begin{itemize}
    \item The IP, even where it could not prove that it has found an optimal solution, does no worse than the baselines. In fact, it outperforms the best of them by a margin of around 20\% in terms of the solution value (latency) for half of the considered workloads.
    \item Similarly as for max-load minimization (\cref{sec:experiments}), we note that the solution quality improves slowly over time, and most of the runtime is often spent on certifying the near-optimality of the found solution; it would therefore be reasonable to cut the computation much sooner, still obtaining high-quality solutions.
    \item In particular, for each workload it took the IP solver at most 7 minutes to match the solution quality of the best baseline.
\end{itemize}

\paragraph{Comparison to baselines.}
\begin{itemize}
    \item \textbf{Greedy}: our algorithm achieves latency lower by up to 72\% (i.e., over $3 \times$ faster inference; 23\% lower latency on average). 
    \item \textbf{Max-load DP}: the latency-IP achieves lower by up to 42\% (17\% on average). 
    This shows that the best splits for latency minimization
    are, indeed,
    different from the best splits for max-load minimization (throughput maximization for pipelined settings).
    Still, the max-load DP turns out to be the best baseline in 5 out of the 8 cases,
    showing some degree of compatibility between the two objectives.
    \item \textbf{Scotch}: our algorithm achieves latency lower by up to 67\% (40\% on average). In fact, Scotch never does better than the greedy heuristic. Furthermore, as it does not balance devices with respect to memory usage, it violates the memory constraints by up to 34\% in some cases.
    \item \textbf{Human expert} splits: as in \cref{sec:experiments}, we provide them for layer graphs only, due to the large node counts and high branching of operator graphs. As the expert splits were not designed with our strictly memory-bound scenario in mind, two of them are unbalanced with respect to memory usage, violating the size constraints by more than a factor $3 \times$. For the other two,
    our algorithm achieves latency lower by up to 23\% (17\% on average).
\end{itemize}

%\vspace{-0.8em}
\section{Conclusions}% and conclusions}
\label{sec:conclusions}
%\vspace{-0.4em}

In this paper we give algorithms for the problem of model partitioning
of DNN workloads.
They target both inference and training, and optimize the objectives of either minimizing latency or maximizing throughput.
Our work follows a principled algorithmic approach, in which we identify the "right" combinatorial optimization problem to solve, and find \emph{provably optimal} splits.
While other approaches struggle to capture long-term dependencies in the graph
and require trying large numbers of placements on the target system, we solve the global, end-to-end joint placement and scheduling problem in one shot.
Our algorithms are efficient and can be run on arbitrary DAGs, including operator-granularity graphs, and are hardware-platform agnostic.
Experiments show that they outperform human experts and significantly improve over state-of-the-art methods.
%
%Our algorithms are efficient -- in particular, our Dynamic Programming approach (Section~\ref{sec:dp}) takes only seconds for most workloads. 
%
%Thanks to this,
%and
%In contrast to prior work, they can be run on fine-granularity \emph{operator graphs}; we find that this allows the throughput to be increased by up to 6\%.
%
%We also give Integer Programming based algorithms to find the best \emph{non-contiguous} splits, which results in throughput gains of up to 27\% on some workloads (compared to contiguous splits).
%
%Finally, our techniques do not require runtime placement changes and they can be used for any hardware platform (FPGA, ASIC, TPU, GPU, CPU).

\bibliographystyle{alpha}
\bibliography{cite}

\clearpage
\appendix

\section{Objective Functions Across Schedules}
\label{sec:max_fw_bw}

In \cref{sec:training_and_throughput_maximization}
we have argued that
for PipeDream schedules,
the objective function that accurately reflects the quality of any split,
that is, the average time taken per sample (inverse throughput),
is $\max_i (\FW{i} + \BW{i})$,
where $\FW{i}$ and $\BW{i}$
are the respective loads/costs of the forward and the backward subgraph associated with device $i$.
This is the objective function that we minimize in both our IP and DP solutions.

In the case of GPipe schedules,
we have argued that the objective function can be formulated as $\max_i \FW{i} + \max_i \BW{i}$.
This is equal to the former if the maximizing $i$'s are the same -- that is, if the bottleneck device is the same for the pipelined forward pass
(the first seven columns in \cref{fig:train_gpipe})
as for the pipelined backward pass
(the next seven columns).

This usually holds true for real-world DNN workloads due to three factors described below:
\begin{itemize}
    \item For any device, its forward subgraph $S$ and its backward subgraph $S'$ contain paired nodes; that is, most nodes in the backward subgraph $S'$ have a corresponding forward node, which, due to colocation constraints, will be in $S$, and vice versa.
    For instance,
    most forward nodes operate on a set of weights,
    for which the backward pass then computes gradients and weight updates.
    \item The processing and communication times of such corresponding/colocated nodes are correlated; for example, if the forward node corresponds to a matrix multiplication, then the processing times of both forward and backward nodes will grow with the size of the matrix.
    \item In fact, GPipe
    uses a re-materialization technique~\cite{Chen2016TrainingDN}
    to save memory: it discards stashed activations generated in the forward pass (needed later in the backward pass), and instead reruns the forward pass operators in the backward pass to re-materialize the required stashed activations for the backward operators.
    If this is reflected in the DNN workload operator-graph or layer-graph, then it further increases the aforementioned correlation between forward and backward times.
\end{itemize}

\begin{figure}[t!]
\centering
\includegraphics[width=0.65\textwidth]{./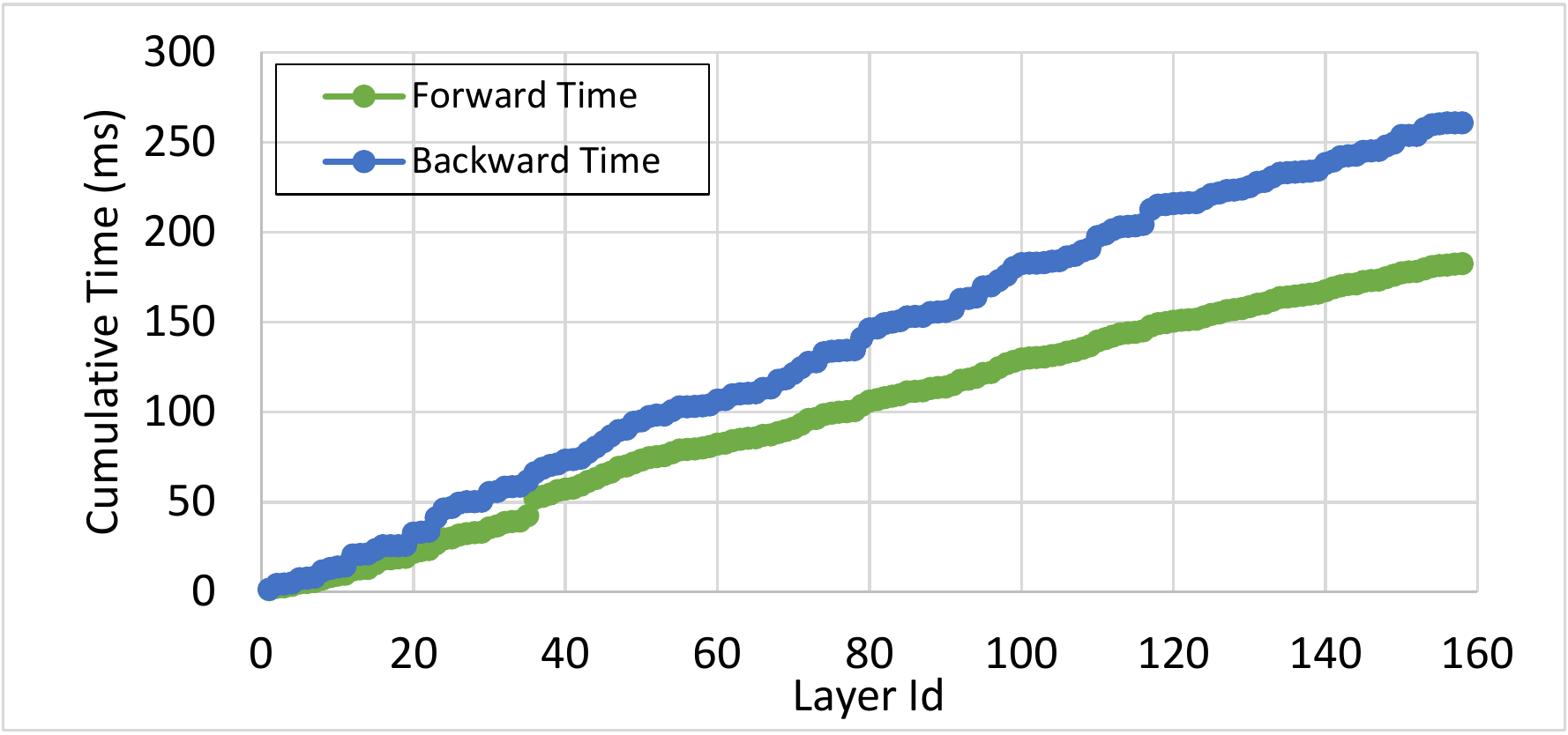}
\caption{Cumulative training time for forward and backward layers of ResNet50 (layer graph). The time accumulates with each layer progressively, that is, the $i$-th entry is the sum of processing times of layers from $1$ to $i$.}
\label{fig:cdf}
\vspace{-7pt}
\end{figure}

In \cref{fig:cdf} we plot cumulative forward and backward times for an example training workload (that does not use re-materialization), which grow at a similar pace.
These runtimes have been profiled on a GPU.

The above discussion motivates the use of our objective $\max_i (\FW{i} + \BW{i})$ as a proxy for the objective $\max_i \FW{i} + \max_i \BW{i}$
also in the case of GPipe schedules.
Nonetheless, our IP solution can also be adjusted to optimize the latter objective.
Unsurprisingly, we empirically find that splits found by optimizing either objective differ by at most 6\% when using re-materialization.

\section{DP Preprocessing and Reductions}
\label{app:preprocessing}

\paragraph{DP preprocessing.}
In our Dynamic Programming solution we need to handle colocation constraints given in the input: certain pairs of nodes operate on the same state and thus they are required to be on the same device.
A common scenario where this arises concerns forward and backward nodes that operate on the same set of weights, but pairs of forward nodes (or pairs of backward nodes) can also be colocated.
In the input files this is expressed via the \texttt{colorClass} field: nodes of the same color class must be placed on the same device.

Moreover, for training workloads, the DP can natively find only contiguous splits, but now most devices need to be assigned two contiguous subgraphs (backward and forward). 
Therefore we run the DP only on the forward part, but we take the corresponding backward nodes together with every considered contiguous subgraph.
However, some care is required to make sure that we assign those backward nodes that do not have a corresponding forward node; we call these backward nodes \emph{orphaned}.

\newcommand{\cfw}{\ensuremath{C_{\textrm{FW}}}}
\newcommand{\cbw}{\ensuremath{C_{\textrm{BW}}}}
For the reasons outlined above, our solution needs to run a series of preprocessing steps before the core DP method can be applied:
\begin{itemize}
    \item For every color class $C \subseteq V$, i.e., a set of nodes that must be colocated, let $\cfw$ and $\cbw$ be the forward and backward nodes in $C$, respectively (so that $C = \cfw \cup \cbw$).
    We \emph{contract} each set $\cfw$ and each set $\cbw$ (that is, we compress each of them into a single node; this new node will be forward for $\cfw$ and backward for $\cbw$).
    \item The input graph was guaranteed to be acyclic at the beginning,
    but the new contracted graph may no longer be acyclic.
    For instance, there could be a path $u, v, w$ where $u$ and $w$ are colocated (but not $v$); then the contracted graph will have edges in both directions between $v$ and the new node corresponding to $\{u,w\}$. In the original graph, any colocation-respecting contiguous split would need to contain all of $u,v,w$ in a single subgraph;
    more generally, every strongly connected component in the contracted graph needs to be colocated. Therefore, we contract all strongly connected components. Now the contracted graph is again acyclic.
    \item Later, when we run the DP, while considering a subgraph $S$ of forward nodes we will consider the subgraph $S'$ of their corresponding backward nodes at the same time, and take the total computation and communication cost of $S \cup S'$ into account. Thus, when we have assigned all forward nodes, we will have also assigned all backward nodes that are not orphaned. However, orphaned nodes would not be assigned to any subgraph/device.
    
    To prevent this behavior, we introduce new artificial forward nodes, to be images of the orphaned backward nodes. When the DP decides where to place these new forward nodes, it will also have decided the placement of the orphaned backward nodes. (At the end we will remove the artificial nodes from the final split.)
    
    However, if the new nodes are isolated (have no adjacent edges), then the number of ideals grows exponentially\footnote{Suppose we have introduced $r$ such new nodes; since each of them is free to be or not be in an ideal, the number of ideals grows by a factor $2^r$, and the DP runtime, which depends on the number of ideal pairs $I', I$ with $I' \subseteq I$, grows by a factor $3^r$.}; furthermore, as the placement of the new forward and orphaned backward nodes is arbitrary, we may end up with non-contiguous splits on the backward side.
    
    To deal with these issues, we also add new artificial edges adjacent to the new artificial nodes. Since backward nodes and edges mostly resemble a mirror image of their corresponding forward nodes and edges, we add the new edges in such a way as to also build such a mirror image. Namely, for a backward edge $(u',v')$ where at least one of $u'$, $v'$ is orphaned, we add a forward edge $(v,u)$, where $u$ and $v$ are the forward images of $u'$ and $v'$ respectively (note that at least one of $u$, $v$ is a new artificial node).
\end{itemize}

After these preprocessing steps, we can use our core DP method on the contracted graph.
Once this is done, we map the resulting splits back to the original graph and return the result.
For more details on implementation, see the attached code and the comments therein.

We remark that due to our preprocessing steps,
the number of ideals may sometimes be smaller
than the number of nodes in the initial input graph
(this happens for several of our workloads in \cref{tab:tput-workloads}).

\paragraph{Non-uniform outgoing communication costs.}
In the case of operator graphs,
the input files for our solvers are created based on ONNX computation graphs.
There, communication costs are given on edges, rather than on nodes as we require in our model (see \cref{sec:model}).
In the vast majority of cases, all edges going out of the same node $u$ have the same cost, and we can set that cost as parameter $\comm{u}$.
However, sometimes there could be two or more edges with different costs going out of the same node in an ONNX graph;
this situation corresponds to e.g.~sending different parts of the operator's output to different operators.
In this case, we perform the following reduction:

    Suppose that $u$ has outgoing edges to nodes $v_1, v_2, ..., v_r$ with possibly different edge costs $d_1, d_2, ..., d_r$. For every outgoing edge $(u,v_j)$, we \emph{subdivide} it:
    insert a new node $w_j$ in the middle and replace the edge $(u,v_j)$ with two edges $(u,w_j)$ and $(w_j,v_j)$. The new node $w_j$ should have $\cpu{w_j} = \fpga{w_j} = \mem{w_j} = 0$ and be colocated with $u$.
    We set $\comm{w_j} = d_j$.
    Finally, set $\comm{u}$ to any value, say $\infty$; this communication cost will never be paid in any feasible solution, as now $u$ is colocated with all of its successors, which are $w_1, w_2, ..., w_r$.
    
    After obtaining a final split, we may remove the artificial nodes $w_j$ from the solution.
    It is easy to see that the way we have set the outgoing communication costs $\comm{}$ on nodes reflects the edge-communication costs given in the input ONNX graph.

\section{Extensions}
\label{sec:extensions}

In this section,
which deals with throughput maximization
(i.e.~the pipelined setting),
we briefly explain how to adjust our model and solutions
so as to account for certain different or more complex deployment scenarios
that appear in related work or in practice.

\subsection{Interleaving Communication and Computation}
\label{sec:interleaving_communication_and_computation}

Throughout the paper we have assumed that accelerators are invoked when their inputs are ready, at which point they are transferred to the accelerator memory; next, computation takes place; next, outgoing transfers take place
(see \cref{sec:model}).
After that, the in-transfer for the subsequent sample/minibatch may begin, and so on.
For this reason, the load of a device is defined as the \emph{sum}
of the computation cost and the communication cost.
However, it is also reasonable to assume that communication (data transfers) may proceed in parallel to computation, at least for different samples.
For instance, once we have finished the in-transfer for sample 1, we might simultaneously start the processing of sample 1 and the in-transfer for sample 2.
This is the setting considered in the PipeDream paper~\cite{harlap2018pipedream}.

Both of our solutions (DP and IP) can be easily adjusted to this setting:
one just needs to define the load of a device
as the \emph{maximum} of the computation cost and the communication cost,
rather than the sum.
In terms of pipeline schedules,
one can think of
splitting the device into two virtual devices,
one holding the communication portion of the load
and
the other holding the computation portion,
that \emph{can} be processing at the same time.
Then either virtual device could
be a bottleneck in the pipeline.

In fact, one can further assume that the in-transfer and the out-transfer are done over separate channels (full-duplex communication);
then a maximum of three quantities (in-transfer cost, computation, out-transfer cost) should be used.

\subsection{Replication}

An alternative to model parallelism is data parallelism:
an approach where the entire model is replicated over multiple devices
that process minibatches in parallel.
When using this approach,
the communication cost associated with synchronizing the parameters of the entire model proves to be very high for many DNN workloads.
%However, it can be very beneficial
Nevertheless, it can also yield large gains for other workloads, especially sparser ones (with a small number of parameters relative to computation cost).
%for networks -- or fragments thereof -- with sparse parameters.
PipeDream~\cite{harlap2018pipedream} proposed 
a hybrid model-parallel/data-parallel approach,
where we form a pipeline, but certain subgraphs in this pipeline
can be \emph{replicated} over multiple devices.
This allows the automated partitioner to replicate those fragments of the network that will reap the most benefit
while keeping synchronization costs low.

We can also introduce this capability into our DP algorithm.
When the DP decides whether to place the currently considered subgraph on a CPU or on an accelerator, now it will also decide how many devices to use. That is, in the DP relation, where previously we had $\max\left( \dpstate{k'-1}{\ell'}{I'}, \fpganoindex(I \setminus I') \right)$,
now we
%iterate over the number $k''$ of accelerators to use and
write\footnote{We treat the CPU-related term similarly.}
\[\min\nolimits_{k''=1}^{k'}\max\left( \dpstate{k'-k''}{\ell'}{I'}, \fpganoindex(I \setminus I', k'') \right),\]
where $\fpganoindex(I \setminus I', k'')$
is the average time per sample for this subgraph when replicated over $k''$ accelerators.
In absence of weight synchronization,
this average time would be just $\fpganoindex(I \setminus I') / k''$.
Weight synchronization (assuming efficient AllReduce
collective communication) contributes a term $\left((k'' - 1) \cdot \sum_{v \in I \setminus I'} \mem{v}\right) / (k'' \cdot B)$,
where $\mem{v}$ are sizes of weights associated with nodes
and
$B$ is the communication bandwidth.
Thus, $\fpganoindex(I \setminus I', k'')$ should be either the sum or maximum of these two terms, depending on our assumption of interleaving communication with computation (see \cref{sec:interleaving_communication_and_computation}).

This modification of the DP increases the running time by a factor of $O(\max(k,\ell))$. The memory usage remains unchanged.

\subsection{Accelerator Hierarchies}

Throughout the paper we have assumed 
a homogeneous system
with $k$ accelerators
and $\ell$ CPU cores
(probably a single machine).
To more precisely capture a distributed setting,
one can consider a hierarchical collection of accelerators,
such as clusters of GPUs connected internally with faster interconnects and externally (i.e.~between clusters) with slower connections (or over a network).
Such a multi-level model is used in
PipeDream~\cite{harlap2018pipedream}.
Now, the cost of transferring data over an edge between two nodes
depends on whether these nodes are placed on devices in the same or different clusters (or even on different machines).
The main new challenge is knowing which cost should be taken into account.

The DP solution in PipeDream handles this by dynamically computing optimal splits not only for prefixes of the input network
(that correspond to our ideals),
but for every contiguous segment of the network.
We remark that we can use the same method, at a cost of an $O(\numideals)$-factor increase in both memory usage (number of DP states) and running time.

\end{document}